\definecolor{bestgreen}{RGB}{200,230,200}
\definecolor{secondblue}{RGB}{200,220,255}
\declaretheoremstyle[headpunct={\normalfont.}]{theoremstyle}
\definecolor{codegray}{rgb}{0.5,0.5,0.5}
\definecolor{codepurple}{rgb}{0.58,0,0.82}
\definecolor{backcolour}{rgb}{0.95,0.95,0.92}
\definecolor{commentcolour}{rgb}{0.0,0.6,0.0}
\definecolor{keywordcolour}{rgb}{0.0,0.0,0.6}
\definecolor{stringcolour}{rgb}{0.6,0.0,0.0}
\definecolor{blackcolour}{rgb}{0,0,0}
\lstdefinestyle{mystyle}{
  basicstyle=\fontfamily{\ttdefault}\scriptsize,          %
  backgroundcolor=\color{backcolour},   %
  keywordstyle=\color{keywordcolour},           %
  commentstyle=\color{commentcolour},            %
  stringstyle=\color{green},           %
  showstringspaces=false,              %
  breaklines=true,                     %
  captionpos=b,                        %
  frame=single,                          %
  frameround=tttt,
  abovecaptionskip=1em,                %
  aboveskip=1em,                       %
  belowskip=1em,                       %
}
\DeclareMathOperator*{\argmax}{arg\,max}
\title{Real-World Reinforcement Learning of\\ Active Perception Behaviors}
\author{Edward S.~Hu$^{*,1}$ \hskip1em  Jie Wang$^{*,1}$ \hskip1em  Xingfang Yuan$^{*,1}$ \hskip1em Fiona Luo$^{1}$ \hskip1em Muyao Li$^{1}$
\\\textbf{Gaspard Lambrechts$^2$ \hskip1em Oleh Rybkin$^3$ \hskip1em Dinesh Jayaraman$^1$}
\\ $^*$ Equal contribution \hskip1em $^1$University of Pennsylvania   \hskip1em   $^2$University of Li\`ege   \hskip1em $^3$UC Berkeley
}
\begin{document}

\maketitle

\begin{abstract}
A robot's instantaneous sensory observations do not always reveal task-relevant state information. Under such partial observability, optimal behavior typically involves explicitly acting to gain the missing information.
Today's standard robot learning techniques struggle to produce such active perception behaviors.
We propose a simple real-world robot learning recipe to efficiently train active perception policies. Our approach, asymmetric advantage weighted regression (AAWR), exploits access to ``privileged'' extra sensors at training time. The privileged sensors enable training high-quality privileged value functions that aid in estimating the advantage of the target policy. Bootstrapping from a small number of potentially suboptimal demonstrations and an easy-to-obtain coarse policy initialization, AAWR quickly acquires active perception behaviors and boosts task performance. In evaluations on 8 manipulation tasks on 3 robots spanning varying degrees of partial observability, AAWR synthesizes reliable active perception behaviors that outperform all prior approaches. When initialized with a ``generalist'' robot policy that struggles with active perception tasks, AAWR efficiently generates information-gathering behaviors that allow it to operate under severe partial observability for manipulation tasks. Website: 
\href{https://penn-pal-lab.github.io/aawr/}{https://penn-pal-lab.github.io/aawr/}  

\end{abstract}

\section{Introduction}
Any organism needs to extract information from the world via its sensory apparatus to make decisions, solve tasks, and survive. One strategy is to have high bandwidth and sophisticated sensors like the human eye, to sense as much information as possible and embrace the ``blooming, buzzing confusion of the senses''~\cite{james1890principles} this entails, subject to natural limits, such as a local field of view. Another strategy is to use the ability to move around in the world to gather new information and overcome our sensory limitations - we scan our eyes across a crowded party to find a friend, and polish our glasses to get a clearer view.
Such information gathering behaviors are called active~\cite{bajcsy1988active} or interactive~\citep{bohg2017interactive} perception based on whether they only move a sensor around the world or if they also alter the world. In the following, we will use ``active perception'' as shorthand to refer to all such behaviors, except when the distinction is particularly pertinent. 
In this work, we are interested in learning active perception behaviors to compensate for the limitations of various sensory setups in robots, ranging from entirely blind robots operating purely from proprioception, to robots operating with sophisticated multi-camera setups.  
It has been hard to learn useful active perception behaviors for robotics, and not for lack of trying \citep{bajcsy1988active,bohg2017interactive,cheng18reinforcement,yang2023seq2seq,chuang2024activevisionneedexploring,krainin2011autonomous,rivlin2000control,liu2020target,burusa2024attention,davison2003real,wu2015active}. 
Of the techniques commonly in vogue for robotics, imitation learning is ill-suited because acquiring optimal active perception demonstrations can be cumbersome and unnatural (e.g. forcing a teleoperator to look through wrist cameras). In theory, RL should be able to learn active perception behaviors from interaction, but in practice it is too sample-inefficient even in fully observed settings, leave alone the partially observed settings where active perception is relevant. Moreover, sim-to-real transfer is hard for such tasks because it is tied closely to sensory capabilities, and many sensory observations of interest, including depth, LIDAR, RGB cameras, audio, touch, contact, force sensors, etc., are all hard to simulate well enough to reliably learn transferable behaviors. This difficulty is also reflected in the inability of today's state-of-the-art generalist policies trained on massive amounts of robot tele-operation data to perform even simple search tasks, as we will show in experiments.

We demonstrate that bootstrapping from suboptimal demonstrations and incorporating rewards in an offline-to-online RL algorithm is a viable approach for synthesizing active perception behaviors. Critically, the  RL algorithm must be designed appropriately, and we theoretically derive that asymmetric access to extra sensors for critics and value functions is important to correctly estimate supervision signals for the policy in partially observed settings. Specifically,  we take a weighted behavior cloning approach, by extending advantage weighted regression (AWR) \citep{neumann2008fitted,peng2019advantage,nair2020awac,wang2020critic} to incorporate privileged, training-time only observations.
We call this approach \textbf{Asymmetric AWR} (AAWR). As an example, in one experiment, we give critics privileged access to object detectors to train open-loop policies that only receive proprioception and initial object positions. In another experiment, we give critics privileged access to privileged segmentation masks to train search policies in visually cluttered scenes. 

Our key contributions are as follows: \textbf{(1)} We efficiently train real-world active perception policies by introducing AAWR, which uses privileged value functions to better supervise the policy. \textbf{(2)} We provide theoretical justification for using privileged advantage estimates for AWR in POMDPs, by showing that maximizing the expected policy improvement in a POMDP results in the AAWR objective. \textbf{(3)} We demonstrate that AAWR effectively learns a variety of active and interactive perception behaviors in 8 different settings - over varying types of partial observability,  multiple types of simulated and real robots, and  varied tasks.

\begin{figure}[t]
\centering
\includegraphics[width=\textwidth]{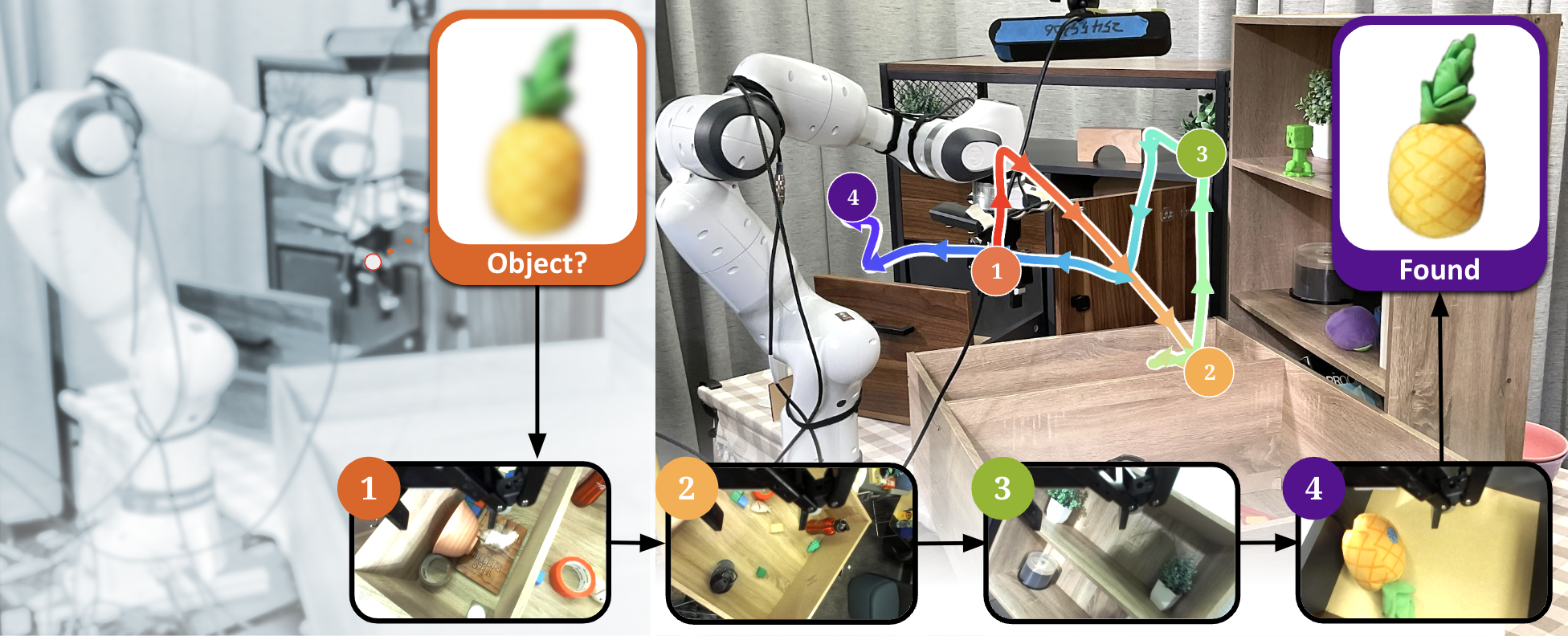}
\caption{Left: A robot with a single wrist image struggles to find objects in a heavily occluded scene. Right: An active perception policy searches through potential hiding spots to find the target object. }
\label{fig:concept}
\end{figure}

\section{Related Work}

Active perception policies are frequently trained to optimize information-theoretic objectives such as uncertainty reduction and next best viewpoint selection\citep{jayaraman2017learning,grimes2023learning,jiang2023fisherrf,he2023active,tao2024view}. 
Such approaches are used for task-agnostic applications  like object tracking \citep{rivlin2000control, browatzki2012active}, scene reconstruction \citep{davison2002simultaneous,davison2003real,dong2019multi}, pose estimation \citep{wu2015active}, or free-space navigation \citep{he2023active}. However, such approaches are not applicable to our setting in a variety of ways. First, many assume the ability to freely query views of a scene, without regard to task constraints \citep{jayaraman2017learning,jiang2023fisherrf}. In our manipulation settings with clutter, there are many informative viewpoints that are difficult to reach due to physical constraints. Next, such information-theoretic metrics are not task-relevant - to locate a toy, a human may naturally look in drawers, shelves, cabinets, or other storage areas. But these information-theoretic metrics do not incorporate such task information, and may find the unseen back of a shelf just as interesting. In short, for training active perception policies for robots, we desire a more task-centric active perception approach that considers the constraints of the task and does perception-improving behavior to maximize the task success rate, instead of a task-agnostic metric.

 Imitation and reinforcement learning are natural ways to synthesize task-relevant active perception policies, by using demonstrations and reward functions.  Some prior works\cite{yang2023seq2seq,chuang2024activevisionneedexploring,xiong2025via} use imitation learning to train active perception policies on real robots, but performance is bounded by the demonstrator. However, acquiring optimal active perception demonstrations can be quite  cumbersome (e.g. forcing human tele-operators to look through wrist cameras). On the other hand, RL approaches do not impose this burden, but real-world RL approaches for active perception  \cite{novkovic2020object,pitcher2024reinforcement} require heavy instrumentation (e.g. for constructing task-specific volumetric maps), limiting generality. Without such assumptions,
 RL methods are often limited to simulation due to sample inefficiencies~\citep{zaky2020active,grimes2023learning,shang2023active}. Sim2real transfer, however, is not easily applicable to active perception tasks. This is because active perception tasks are closely related to sensory capabilities, and accurately simulating sensors (e.g. RGB, depth, touch, etc.) is difficult. Relative to prior work, we propose an RL method that efficiently learns active perception behavior on real robots while requiring minimal instrumentation (e.g. uncalibrated RGB cameras) at inference time. 

 To do this, we operate in an asymmetric training setting, %
 exploiting privileged information \citep{vapnik2009new} during training time to improve policy training \citep{lambrechts2023informed,hu2024privileged}. Privileged information approaches have been widely successful in solving partially observed tasks \citep{li2020suphx,vasco2024super} and have been deployed on real world robots with sim2real transfer \citep{pinto2017asymmetric,lee2020learning,kumar2021rma,salter2021attention}. As mentioned above however, it is difficult to perform sim-to-real transfer for active perception problems.  
 Further, asymmetric RL approaches for sim2real \citep{kumar2021rma,lee2020learning} 
 are designed to exploit billions of privileged simulator state transitions \citep{chen2023sequential}, infeasible for privileged training in the real world, where 
 we only have small amounts of potentially noisy privileged observations. We develop a new ``asymmetric advantage weighted regression'' RL algorithm that is more capable of learning efficiently in the real world, exploiting privileged additional sensors. 

\section{Asymmetric Reinforcement Learning in Active Perception POMDPs}
\label{sec:background}
Consider a robot tasked with finding a toy in a cluttered room using just its wrist camera, as seen in \cref{fig:concept}. The toy's location is hidden to the robot, and it must scan the scene with its wrist camera in an efficient search path to find the toy quickly. These types of tasks where the robot has limited sensing but the reward and dynamics is dependent on some hidden environment state (e.g. toy location), are naturally modelled by partially observed Markov decision processes (POMDPs)~\cite{kaelbling1998planning}. 

A POMDP is represented by the tuple $(\mathcal{S}, \mathcal{A}, \mathcal{O}, T, R, E, P, \gamma)$ where $\mathcal{S}$ is the state space, $\mathcal{A}$ the action space, and $\mathcal{O}$ the observation space.
The dynamics are described by the transition density $T(s_{t+1} \mid s_t, a_t)$, the reward density $R(r_t \mid s_t, a_t)$, the observation density $E(o_t \mid s_t)$, and the initial state density $P(s_0)$. For the search task of \cref{fig:concept}, the state would include the robot position and toy location, while the observation would be the wrist camera view.
 The goal of policy synthesis in a POMDP is to find an optimal policy $\pi^*$ that maximizes the expected return $J(\pi) = \mathop{\mathbb{E}}^\pi [ \sum_{t=0}^\infty \gamma^t r_t ]$, where the discount factor $\gamma$ weights the importance of future rewards.
In a POMDP, such a $\pi^*$ generally requires access to the complete history $h_t = (o_0, a_0, \dots, o_t) \in \mathcal{H}$ of past observations and actions.
This contrasts with an MDP, a special case of POMDP with $s_t = o_t$, in which the optimal policy depends only on the current state.

Back to POMDPs, it is usually impractical to learn a policy conditioned on the full history, since its input space would grow exponentially with time.
Instead, it is common to consider an ``agent state'' $f\colon \mathcal{H} \rightarrow \mathcal{Z}$ that is recurrent in the sense that $z_t = f(h_t)= u(f(h_{t-1}), a_{t-1}, o_t)$, such as a sliding window. Then, the policy $\pi \in \Pi = \mathcal{Z} \rightarrow \Delta(\mathcal{A})$ must map from the agent state.
Interestingly, when using an agent state and such policies, the POMDP can be transformed into an equivalent MDP whose state $(s_t, z_t)$ includes both the environment state and the agent state, with policies $\pi \in \Pi$ conditioned on the latter state only \citep{jiang2019value,dong2022simple, lu2023reinforcement, sinha2024agent}.

\subsection{Background: Advantage Weighted Regression (AWR) for Markov Decision Processes}

Advantage weighted regression (AWR) \citep{neumann2008fitted,peng2019advantage} is a policy iteration algorithm for fully observed MDPs whose policy update objective is written as a behavior cloning loss, weighted by the estimated advantage of the transition. AWR is presented as a versatile algorithm that is able to leverage offline / off-policy data as well as on-policy data.
More formally, at each iteration, AWR seeks to find a policy $\pi\colon \mathcal{S} \rightarrow \Delta(\mathcal{A})$ that maximizes the expected surrogate improvement,
$
    \hat{\eta}(\pi) = \mathop{\mathbb{E}}_{s \sim d_\mu(s)} \mathop{\mathbb{E}}_{a \sim \pi(a \mid s)} A^\mu(s, a)
    \approx J(\pi) - J(\mu)
$
with respect to a behavior policy $\mu$, under KL constraint $\mathbb{E}_{s \sim d_\mu(s)} \left[ \text{KL}(\pi(\cdot \mid s) \parallel \mu(\cdot \mid s) \right] \leq \varepsilon$.
The behavior policy $\mu$ typically corresponds to the mixture of all past policy iterates that generated the dataset of online interactions $\mathcal{D}_\text{on}$.
When relaxing the KL constraint with multiplier $\beta > 0$, optimizing this soft-constrained objective is equivalent to maximizing the final AWR objective:
\begin{equation}
    \label{eq:awr_loss}
    \mathcal{L}_\text{AWR}(\pi)
    =
    \mathop{\mathbb{E}}_{s \sim d_\mu(s)}
    \mathop{\mathbb{E}}_{a \sim \mu(a \mid s)}
    \bigg[ \exp \big( A^\mu(s,a) / \beta \big) \log \pi(a \mid s) \bigg].
\end{equation}
The original AWR algorithm \citep{peng2019advantage} used either a return-based estimate or a $\text{TD}(\lambda)$ estimate of the advantage, by learning a value function with Monte Carlo estimation.
Followup works \citep{nair2020awac, wang2020critic} used a critic-based estimate of the advantage by learning a Q-function with TD learning, which improves sample efficiency by better leveraging off-policy samples. We build on these latter works. For more detailed background on AWR and related approaches, see \cref{app:awr}.

\subsection{The Need for Asymmetric Training in POMDPs}

We now derive the AWR objective for POMDPs, showing why it requires asymmetry during training.
We also show that the unprivileged value functions associated with a naive application of symmetric AWR cannot be learned by TD learning.

\paragraph{Asymmetric and Symmetric AWR for POMDPs.} We aim to train a policy $\pi\colon \mathcal{Z} \rightarrow \Delta(\mathcal{A})$, conditioned on the agent state $z_t$ (equivalent to history $h_t=(o_1 \ldots o_t)$) to maximize the return in POMDPs with an AWR-like objective. We consider the asymmetric learning paradigm in which the environment state $s$ available during training (offline or online) but not during policy deployment. We introduce the asymmetric AWR (AAWR) objective:

\begin{equation}
    \label{eq:aawr_loss}
    \mathcal{L}_\text{AAWR}(\pi)
    =
    \mathop{\mathbb{E}}_{(s, z) \sim d_\mu(s, z)}
    \mathop{\mathbb{E}}_{a \sim \mu(a \mid z)}
    \bigg[ \exp \big( A^\mu(\textcolor{red}{s}, z, a) / \beta \big) \log \pi(a \mid z) \bigg]
\end{equation}
where $A^\mu(\textcolor{red}{s}, z, a) = Q^\mu(\textcolor{red}{s}, z, a) - V^\mu(\textcolor{red}{s}, z)$ is the privileged advantage function, with $Q^\mu(\textcolor{red}{s}, z, a)$ and $V^\mu(\textcolor{red}{s}, z)$ the privileged critic and value functions, formally defined in \autoref{app:derivation}. See \cref{fig:aawr_computation_graph} for a visual overview of the loss.

\begin{figure}[ht]
\centering{
\includegraphics[width=1\textwidth]{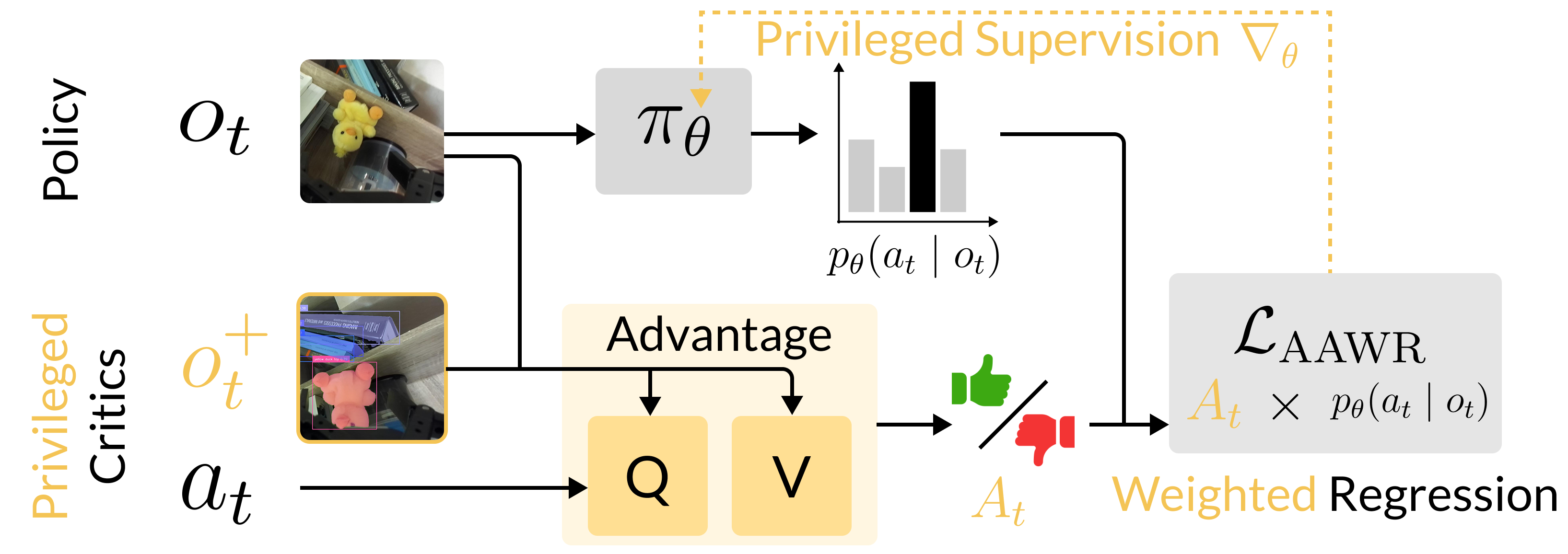}
}
\caption{Top row: The policy receives the partial observation. Bottom row: Privileged observations or state, available only during training, are given to the critic networks to estimate the advantage. The advantage estimates are used as weights in the loss, providing privileged supervision to the policy.}
\label{fig:aawr_computation_graph}
\end{figure}

If the environment state $s$ is unavailable during training, one natural strategy is to solely use agent state $z$ to estimate the advantage. We call this unprivileged variant the symmetric AWR (SAWR) objective, which is just \cref{eq:aawr_loss} with environment state $s$ removed from all terms.

Why is $\mathcal{L}_\text{AAWR}$ the right objective to implement AWR for POMDPs? To show this, we start by observing that the original AWR objective was derived as constrained policy improvement in an MDP setting.
To apply AWR to the POMDP setting with a policy $\pi\in \Pi = \mathcal{Z} \rightarrow \Delta(\mathcal{A})$, we can consider the equivalent MDP with state $(s, z)$, as discussed in \cref{sec:background}.
We then closely follow the original derivation in this MDP by additionally constraining the policy to be in $\Pi = \mathcal{Z} \rightarrow \Delta(\mathcal{A})$.

\begin{restatable}[Asymmetric Advantage Weighted Regression]{theorem}{aawr}
    \label{thm:aawr}
    For any POMDP and agent state $f\colon \mathcal{H} \rightarrow \mathcal{Z}$, the Lagrangian relaxation with Lagrangian multiplier $\beta > 0$ of the following constrained optimization problem,
    \begin{align}
         \max_{\pi \in \Pi}
         &
         \mathop{\mathbb{E}}_{(s, z) \sim d_\mu(s, z)}
         \mathop{\mathbb{E}}_{a \sim \pi(a \mid z)} \left[
            A^\mu(s, z, a)
         \right]
         \\
         \text{s.t.}
         &
         \mathop{\mathbb{E}}_{(s, z) \sim d_\mu(s, z)} [ \text{KL}(\pi(\cdot \mid z) \parallel \mu(\cdot \mid z) ] \leq \varepsilon
    \end{align}
    is equivalent to the following optimization problem: $\max_{\pi \in \Pi} \mathcal{L}_\text{AAWR}(\pi)$.
\end{restatable}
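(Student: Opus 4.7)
The plan is to adapt the classical AWR derivation to the privileged asymmetric POMDP setting by exploiting the equivalent-MDP reformulation with joint state $(s,z)$. First, I would form the Lagrangian with multiplier $\beta>0$:
\begin{equation*}
L(\pi) = \mathop{\mathbb{E}}_{(s,z)\sim d_\mu}\mathop{\mathbb{E}}_{a\sim\pi(a\mid z)}\bigl[A^\mu(s,z,a)\bigr] - \beta\mathop{\mathbb{E}}_{(s,z)\sim d_\mu}\bigl[\text{KL}(\pi(\cdot\mid z)\parallel\mu(\cdot\mid z))\bigr],
\end{equation*}
dropping the additive constant $\beta\varepsilon$. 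Because the KL integrand does not depend on $s$, the KL expectation collapses to $\mathop{\mathbb{E}}_{z\sim d_\mu(z)}[\text{KL}(\pi(\cdot\mid z)\parallel\mu(\cdot\mid z))]$.

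Next, I would perform a pointwise (in $z$) optimization. For each $z$, the problem reduces to maximizing $\int\pi(a\mid z)\bar A^\mu(z,a)\,da - \beta\int\pi(a\mid z)\log(\pi(a\mid z)/\mu(a\mid z))\,da$ over $\pi(\cdot\mid z)\in\Delta(\mathcal{A})$, where $\bar A^\mu(z,a)=\mathbb{E}_{s\sim d_\mu(s\mid z)}[A^\mu(s,z,a)]$ is the $s$-marginalized privileged advantage. Introducing a normalization Lagrange multiplier for $\int\pi(a\mid z)\,da=1$ and setting the functional derivative to zero yields the closed-form optimum $\pi^*(a\mid z)\propto\mu(a\mid z)\exp(\bar A^\mu(z,a)/\beta)$.

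Finally, I would recover the AAWR loss by a projection-to-parametric-class step, mirroring the classical AWR recipe. The cleanest route is via the lifted MDP with joint state $(s,z)$: the unrestricted AWR optimum in this lifted MDP is $\tilde\pi^*(a\mid s,z)\propto\mu(a\mid z)\exp(A^\mu(s,z,a)/\beta)$, and projecting $\tilde\pi^*$ onto $\Pi$ by $\arg\min_{\pi\in\Pi}\mathbb{E}_{(s,z)}[\text{KL}(\tilde\pi^*\parallel\pi)]$ while dropping the $(s,z)$-dependent normalizer --- exactly as in the original AWR derivation --- produces $\max_{\pi\in\Pi}\mathcal{L}_\text{AAWR}(\pi)$.

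The main obstacle is this final projection step. The Lagrangian closed form weights actions by $\exp(\bar A^\mu(z,a)/\beta)$ with the $s$-expectation \emph{inside} the exponent, whereas $\mathcal{L}_\text{AAWR}$ applies per-sample weights $\exp(A^\mu(s,z,a)/\beta)$ with the $s$-expectation \emph{outside}, and by Jensen's inequality these are not identical. I expect the claimed equivalence to hold in the same loose sense as classical AWR (up to the normalizer that is conventionally dropped), and the bulk of the technical care will go into pinning down the precise sense in which the two optimization problems are ``equivalent'' and justifying the normalization-dropping convention in the POMDP setting.
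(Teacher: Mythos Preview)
Your lifted-MDP route matches the paper's proof exactly. The paper temporarily enlarges the policy class to all $(s,z)$-conditioned policies $\Pi^+$, solves the KL-relaxed Lagrangian there to obtain $\tilde\pi^*(a\mid s,z)\propto\mu(a\mid z)\exp(A^\mu(s,z,a)/\beta)$, and then projects onto $\Pi$ by minimizing $\mathbb{E}_{(s,z)\sim d_\mu}[\text{KL}(\tilde\pi^*(\cdot\mid s,z)\parallel\pi(\cdot\mid z))]$, dropping the $(s,z)$-dependent normalizer in the standard AWR fashion to arrive at $\mathcal{L}_\text{AAWR}$.

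The Jensen-gap obstacle you flag is real, and the paper does not resolve it. Your direct pointwise-in-$z$ computation is in fact the honest Lagrangian relaxation over $\Pi$ as literally stated in the theorem, and it yields weights $\exp(\bar A^\mu(z,a)/\beta)$ with the $s$-expectation inside the exponent---which is precisely the SAWR weighting that the paper elsewhere argues is \emph{inequivalent} to AAWR via the very same Jensen argument. The claimed ``equivalence'' therefore holds only in the loose relax-over-a-larger-class-then-project sense inherited from the original AWR derivation, and the paper's proof offers no further justification beyond invoking that convention. Do not over-invest in tightening it: the paper does not, and the statement is best read as a heuristic derivation in the AWR tradition rather than a strict equivalence of optimization problems.
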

The proof is given in \autoref{app:derivation} and concludes the validity of the AAWR objective.
In addition, we also show in \cref{app:symmetric} that optimizing the SAWR objective does not recover the correct solution, because
 an advantage estimator depending on agent state $z$ only is insufficient for estimating the advantage of the equivalent MDP, whose state is $(s, z)$.
In the example in \cref{fig:concept}, it is clear that a privileged advantage estimator with access to toy locations will better estimate success.

\paragraph{Implementation Details.}
To instantiate asymmetric advantage weighted regression,  we train $V^\mu_\theta$ and critic $Q^\mu_\phi$ networks to compute the advantage, mirroring extensions \citep{wang2020critic,nair2020awac} of AWR that train critics to better leverage off-policy data instead of relying on MC returns.
To train the networks, we choose IQL \citep{kostrikov2022offline}, a well known Q-learning algorithm known for its effectiveness in offline RL, offline-to-online RL finetuning \citep{park2024ogbench} and real robot RL \citep{feng2023finetuning} tasks. The networks are trained using IQL's expectile regression objective, see \cref{app:aawr_implementation} for details.

In our POMDPs, in the symmetric setting, the unprivileged advantage estimator would be $\hat{A}_{QV}^\mu(z_t, a_t) = Q^\mu_\phi(z_t, a_t) - V^\mu_\theta(z_t)$.
In the asymmetric setting, the privileged advantage estimator would instead be $\hat{A}_{QV}^\mu(s_t, z_t, a_t) = Q^\mu_\phi(s_t, z_t, a_t) - V^\mu_\theta(s_t, z_t)$.
In \cref{app:aliasing}, we show that the privileged value functions are the fixed point of the Bellman equations described by IQL's objective. In contrast, we show that the unprivileged value functions are not the fixed point of their corresponding Bellman equations, which further motivates the use of AAWR instead of SAWR.

We consider an asymmetric learning setting in which the state $s_t$ or privileged observations $o_t^p$ from additional sensors are available during offline / online training time, but not during policy deployment. The privileged critics take in either observation and state $(o_t, s_t)$, or the augmented observation ($o^+_t = (o_t, o_t^p)$) while the policy only receives $o_t$.

\begin{figure}[h]
\centering
\begin{minipage}[t]{0.6\textwidth}
\vspace{-0.5cm}
\begin{algorithm}[H]
\caption{AAWR Offline-to-Online Training}
\label{alg:aawr_v2}
\begin{algorithmic}[1]
\Require privileged $o^+$, partial $o$, $\pi(\cdot \mid o)$, $Q,V$
\For{$i = 1$ to $N_\text{off}$}
    \State Update $Q,V$ using $\mathcal{D}_\text{off}$ and IQL loss
    \State Update $\pi$ using $\mathcal{D}_\text{off}$ and Eq.~\ref{eq:aawr_loss}
\EndFor
\For{$i = 1$ to $N_{\text{on}}$}
    \State Collect $\{(o_t, o^+_t, a_t, r_t, o_{t+1}, o^+_{t+1})\}^T_{t=1}$ with $\pi$
    \State $\mathcal{D}_\text{on} \leftarrow \mathcal{D}_\text{on} \cup \{(o_t, o^+_t, a_t, r_t, o_{t+1}, o^+_{t+1})\}^T_{t=1}$ 
    \State Update $Q,V$ using $\mathcal{D}_\text{on},\mathcal{D}_\text{off}$ and IQL loss
    \State Update $\pi$ using $\mathcal{D}_\text{on},\mathcal{D}_\text{off}$ and Eq.~\ref{eq:aawr_loss}
\EndFor
\end{algorithmic}
\end{algorithm}
\end{minipage}%
\hfill
\begin{minipage}[t]{0.35\textwidth}
\vspace{-0.5cm}
\begin{algorithm}[H]
\caption{Deployment}
\label{alg:aawr_evaluation}
\begin{algorithmic}[1]
\Require partial $o$, $\pi$
\For{$t = 1$ to $T$}
    \State $o \leftarrow$ env.step($\pi(\cdot \mid o)$)
\EndFor
\end{algorithmic}
\end{algorithm}
\end{minipage}
\caption{Left: The policy is trained using privileged sensors on offline / online data. Right: After training, privileged sensors are no longer available and only the policy is deployed.}
\end{figure}

$\blacktriangleright$ \textbf{Privileged Training} We follow the offline-to-online RL paradigm \citep{nair2020awac,lee2022offline,kostrikov2022offline,feng2023finetuning,nakamoto2023cal,yu2023actor} where the policy and value functions are first pre-trained on offline data using offline RL, and then are further fine-tuned with online interaction in the environment.
 Following lines 1-3 of \cref{alg:aawr_v2}: Given the offline data  $\mathcal{D}_\text{off}$, we update $Q, V$ using the IQL objective and $\pi$ with the \cref{eq:aawr_loss} for $N_\text{off}$ gradient steps.  Next, in lines 4-8 of \cref{alg:aawr_v2}: We execute the policy in the environment and store online transitions into buffer $\mathcal{D}_\text{on}$. We sample an equal number of transitions from both buffers to form a batch, following best practice from prior work \citep{ball2023efficient,feng2023finetuning}. Using the batch, we update $Q, V$ using the IQL objective and $\pi$ with the \cref{eq:aawr_loss}.

$\blacktriangleright$ \textbf{Unprivileged Deployment}: During deployment, only partial observations $o$ remain available, which the policy uses to output actions, as seen in \cref{alg:aawr_evaluation}.

\section{Experiments}
\begin{figure}[ht]
\centering{
\includegraphics[width=1\textwidth]{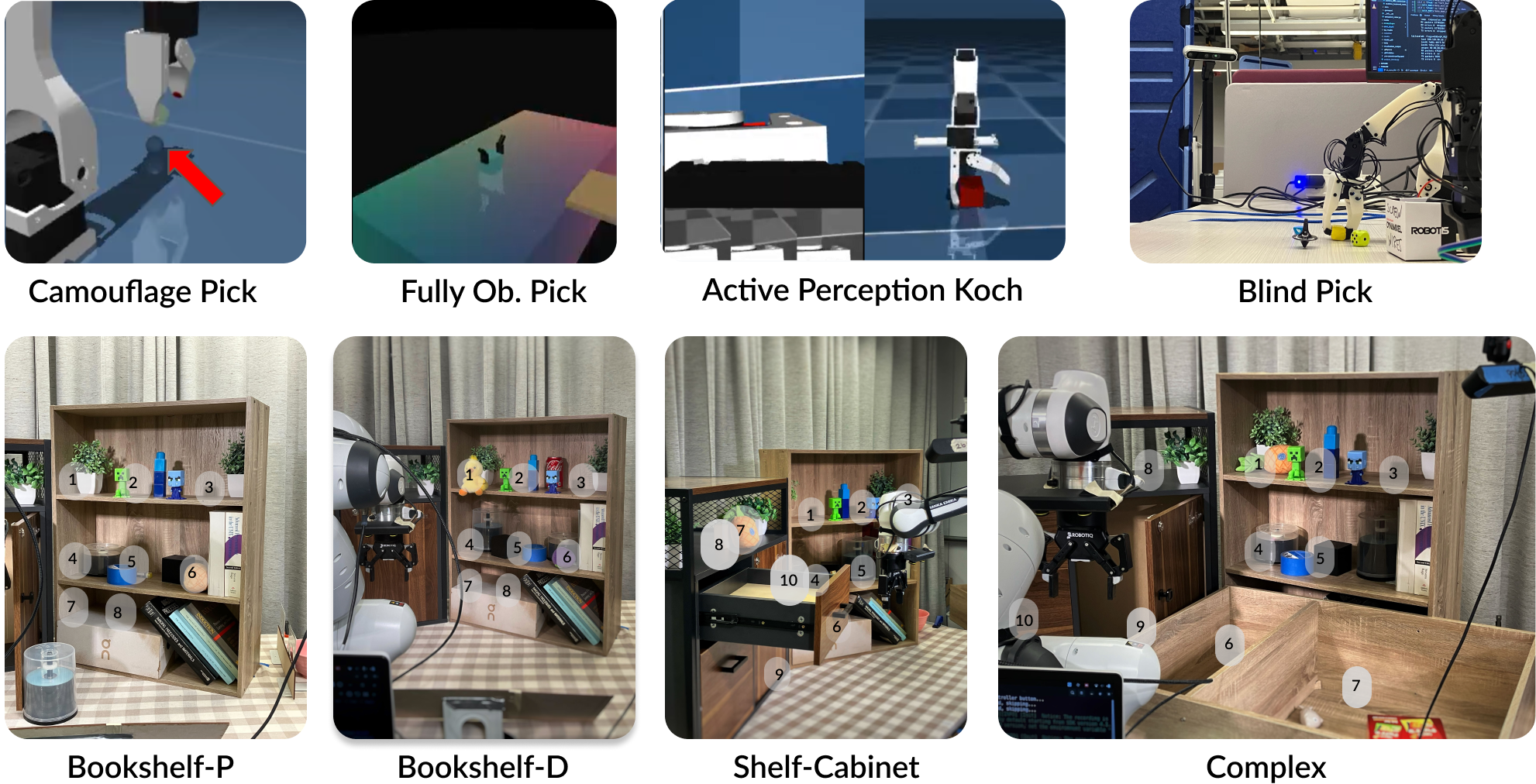}
}
\caption{ We setup 8 different environments with diverse sensor setups and tasks to evaluate active perception behavior. Bottom row, we label the hiding spots for target objects.}
\label{fig:all_tasks}
\end{figure}

In our experiments, we evaluate AAWR's ability to learn active and interactive perception behaviors in a variety of tasks. We aim to answer the following questions. Does AAWR learn active perception behaviors more efficiently than other approaches? Does AAWR work in both offline-to-online and purely offline training settings? 
Does active perception improve the ability of modern generalist policies to solve partially observed tasks?

\subsection{Task setups}
\label{sec:task_setups}
We evaluate AAWR in 8 different tasks, spanning both simulated and real world setups, see \cref{fig:all_tasks} for images of all tasks. The tasks are grouped into simulated active perception tasks: \textbf{Camouflage Pick, Fully Obs. Pick, Active Perception Koch}, and real active/interactive perception tasks: \textbf{Blind Pick, Bookshelf-P, Bookshelf-D, Shelf-Cabinet, Complex}. In \cref{tab:experiments}, we detail task properties like sensor setups, rewards, demonstrations, training budget, etc. 

In the $\pi_0$ handoff tasks (\textbf{Blind Pick, Bookshelf-P, Bookshelf-D, Shelf-Cabinet, Complex}), we use three metrics to evaluate the search behavior. First, \textbf{Search} is a score of the policy behavior with the following rubric: 1) the target appears in wrist camera frame, 2) the policy approaches the target, and 3) the policy remains fixated on the target after 5 timesteps. Next, $\textbf{Completion}$ is the rate at which $\pi_{0}$ successfully grabs the object after switching from the active perception policy. Finally, $\textbf{Steps}$ is the amount of steps the policy takes to complete the task.  
See \cref{app:real_active_perception_details,app:real_koch_results,app:sim_results_details} for comprehensive descriptions of the tasks.

\begin{table}[ht]
\centering
\caption{The 8 tasks vary in embodiment, nature of partial observability, privileged sensors, reward, demo quantity and quality, and training budget.}
\resizebox{1.0\textwidth}{!}{%
\begin{tabular}{lllll}
\toprule
\makecell{Task \\ Platform} & \makecell{Target Obs.\\Privileged Obs.} & \makecell{Reward\\Demos} & \makecell{Offline Steps\\Online Steps} & \makecell{Description} \\
\midrule
\makecell{Camouflage Pick\\Sim. Koch} & \makecell{Side Cam \\ True Obj. Pos} & \makecell{Sparse \\ 100 suboptimal} & \makecell{20K \\ 80K} & \makecell{Pick up barely\\visible object} \\
\addlinespace
\makecell{Fully Obs. Pick \\ Sim. xArm} & \makecell{Side Cam \\ True Obj. Pos} & \makecell{Sparse \\ 100 suboptimal} & \makecell{20K \\ 20K} & \makecell{Pick up fully\\visible object} \\
\addlinespace
\makecell{AP Koch \\ Sim. Koch} & \makecell{Wrist Cam \\ True Obj. Pos} & \makecell{Sparse \\ 100 suboptimal} & \makecell{100K \\ 900K} & \makecell{Locate then pick\\up object} \\
\addlinespace
\makecell{Blind Pick \\ Real Koch} & \makecell{Joints, Init Obj. Pos \\ Obj. Pos Estimate} & \makecell{Dense \\ 100 suboptimal} & \makecell{20K \\ 1.2K} & \makecell{Pick object from\\ proprioception} \\
\addlinespace
\makecell{Bookshelf-P \\ Real Franka} & \makecell{Wrist Cam, Joints \\ Bbox, Mask} & \makecell{Dense \\ $\sim$150 suboptimal} & \makecell{100K \\ 0} & \makecell{Look for object \&\\switch to $\pi_0$} \\
\addlinespace
\makecell{Bookshelf-D \\ Real Franka} & \makecell{Wrist Cam, Joints \\ Bbox, Mask} & \makecell{Dense \\ $\sim$100 suboptimal} & \makecell{100K \\ 0} & \makecell{Look for object \&\\switch to $\pi_0$} \\
\addlinespace
\makecell{Shelf-Cabinet \\ Real Franka} & \makecell{Wrist Cam, Joints \\ Bbox, Mask} & \makecell{Dense \\ $\sim$30 suboptimal} & \makecell{100K \\ 0} & \makecell{Look for object \&\\switch to $\pi_0$} \\
\addlinespace
\makecell{Complex \\ Real Franka} & \makecell{Wrist Cam, Joints \\ Bbox, Mask} & \makecell{Dense \\ $\sim$50 expert} & \makecell{100K \\ 0} & \makecell{Look for object \&\\switch to $\pi_0$} \\
\bottomrule
\end{tabular}
}
\label{tab:experiments}
\end{table}

\subsection{Baselines}
We compare against symmetric advantage weighted regression (\textbf{AWR}) without privileged information. Its implementation is identical to that of AAWR, except for the inputs of the critic and value networks. 
Next, we compare against standard behavior cloning (\textbf{BC}), which performs imitation learning on the successful trajectories in the dataset.

\subsection{Results}

\begin{figure}[t]
    \centering
    \includegraphics[height=4cm]{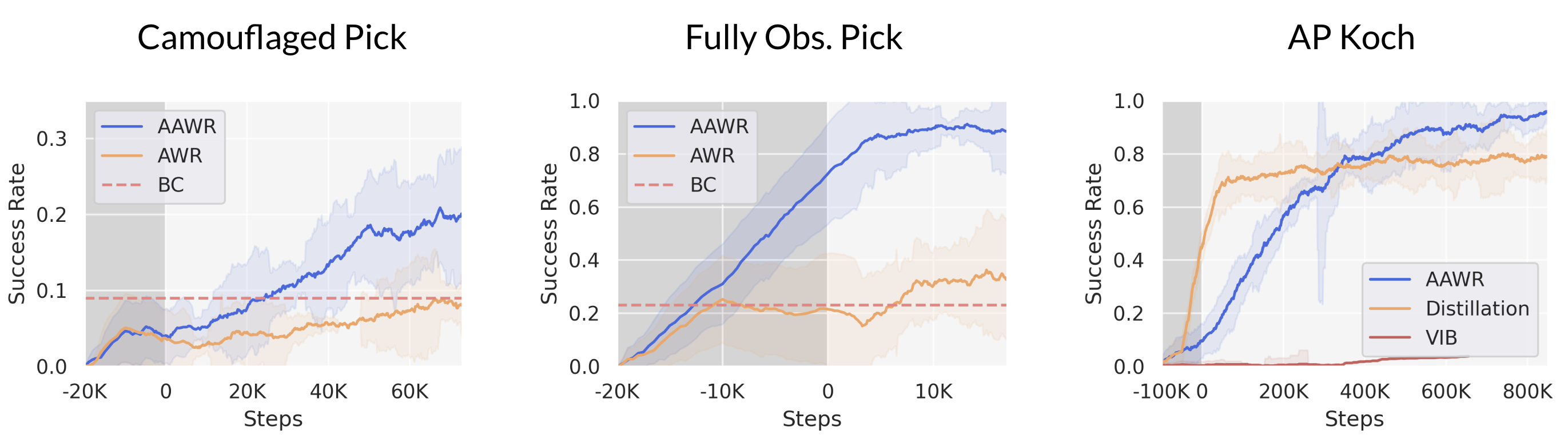}
    \vspace{-0.5cm}
    \caption{Evaluation curves for the simulated experiments, over 10 seeds per method. The shaded regions indicate the offline pretraining phase. AAWR outperforms baselines in all simulated tasks.}
    \label{fig:simulated_results}
\end{figure}

\textbf{Simulated Active Perception tasks.} Fig.~\ref{fig:simulated_results} shows these results, and videos are in Supp and website.
First, we compare against AWR and BC on two simulated active perception tasks with varied degrees of observability, Camouflage Pick and Fully Obs.\@ Pick. In both tasks, AAWR outperforms its non-privileged counterpart AWR, and BC in Camouflage Pick and Fully Obs.\@ Pick, by approximately 2x and 3x respectively. While the gains from using privileged observations are in line with expectations for Camouflage Pick, where inferring the tiny marble from RGB is difficult, it is interesting to see gains even in a fully observable task, where the object position is always clearly inferable from vision. We hypothesize this is because the non-privileged critic needs to \textit{learn} to extract the object position from pixels, whereas the privileged critic does not.

One natural question is: how does AAWR compare against other popular approaches that leverage privileged information? On the simulated \textbf{Active Perception Koch} task, we compare AAWR against \textbf{Distillation} \citep{chen2023sequential}, which first trains a privileged expert policy and then distills it into a partially observed policy. We also compare against a variational information bottleneck approach (\textbf{VIB}) \citep{hsu2022visionbased}, which gives the policy regularized access to privileged information.

As seen in \cref{fig:simulated_results}, only AAWR achieves 100\% on the task by learning to scan the workspace, while other privileged baselines stagnate. Distillation gets high initial success but plateaus at 80\%, learning a local maxima strategy of approaching the center of the workspace. This fails when the object is in the workspace corners, out of camera view.  This behavior arises due to the privileged expert, unaware of the camera's limited field of view, that goes straight for the object. AAWR is a policy iteration algorithm that works with both offline and online data, which allows it to discover the active perception behavior through online exploration, despite starting with suboptimal demos. VIB collapses during evaluation because privileged information is not available, even though it was trained to minimally use privileged information. See \cref{app:sim_results_details} for details and the website for videos.

\begin{wraptable}[9]{r}{0.45\textwidth}
\vspace{-0.2in}
\caption{Koch Interactive Perception.}
\label{tab:koch_openloop}
\centering
\resizebox{0.45\textwidth}{!}{  
\begin{tabular}{lcc}
\toprule
\textbf{Method} & \textbf{Grasp\,\%} & \textbf{Pick\,\%} \\
\midrule
BC                         & 47 & 41 \\
Off. AWR                & 65 & 62 \\
On.  AWR                & 71 & 55 \\
Off. AAWR (ours)   & 88 & 71 \\
On.  AAWR (ours)   & \textbf{94} & \textbf{89} \\
\bottomrule
\end{tabular}
}
\label{tab:koch_interactive_perception_results}
\end{wraptable}
\textbf{Real Interactive Perception task.}
Here, we continue comparing to AWR and BC, this time including purely offline variants of AAWR and AWR, on the Blind Pick task in the real world. As seen in \cref{tab:koch_interactive_perception_results}, both offline and online variants of AAWR outperform its unprivileged counterparts, and BC.  
Among the offline methods, BC performs worst, exhibiting jerky and inaccurate movements. Both offline AWR and offline AAWR demonstrate better approaching and picking behavior, but offline AWR missed grasps and released the block frequently.

Offline AAWR demonstrates more suboptimal behavior, such as releasing the candy after grasping. We observed that after online finetuning, the suboptimal behavior of offline AAWR is reduced, and online AAWR demonstrates the most consistent and robust open-loop picking behavior. Online AAWR reliably places its gripper over the object for grasping. In cases when the object slips from its grasp, the policy attempts to regrasp at the original location. See videos on the website.

\paragraph{Handholding Foundation VLA Policies for Real Active Perception tasks.}

\begin{table}[h]
\caption{In the active perception handoff tasks, AAWR consistently outperforms baselines in terms of search behavior, completion, and speed. Bold = best, underline = second best.}
\centering
\resizebox{\textwidth}{!}{
\begin{tabular}{@{}lccc|ccc|ccc|ccc@{}}
\toprule
\multirow{2}{*}{Method} &
\multicolumn{3}{c|}{\thead{Bookshelf-P}} &
\multicolumn{3}{c|}{\thead{Bookshelf-D}} &
\multicolumn{3}{c|}{\thead{Shelf-Cabinet}} &
\multicolumn{3}{c}{\thead{Complex}} \\
\cmidrule(lr){2-4}
\cmidrule(lr){5-7}
\cmidrule(lr){8-10}
\cmidrule(lr){11-13}
& \thead{Search \\ $\uparrow$} & \thead{Completion \\ $\uparrow$} & \thead{Steps \\ $\downarrow$}
& \thead{Search \\ $\uparrow$} & \thead{Completion \\ $\uparrow$} & \thead{Steps \\ $\downarrow$}
& \thead{Search \\ $\uparrow$} & \thead{Completion \\ $\uparrow$} & \thead{Steps \\ $\downarrow$}
& \thead{Search \\ $\uparrow$} & \thead{Completion \\ $\uparrow$} & \thead{Steps \\ $\downarrow$} \\
\midrule
\textbf{AAWR} & \textbf{92.4} & \textbf{44.4} & \underline{36.6}
                    & \underline{81.3} & \textbf{44.4} & \underline{26.9}
                    & \textbf{78.2} & \underline{40.0} & \underline{46.3}
                    & \underline{73.2} & \textbf{50.0} & \textbf{43.0} \\
AWR                 & \underline{79.6} & 0.0 & \textbf{34.0}
                    & 62.6 & 16.7 & 30.2
                    & 52.3 & 10.0 & \textbf{38.0}
                    & 33.2 & \underline{40.0} & \underline{67.0} \\
BC                  & 29.9 & 20.0 & 84.0
                    & 47.7 & 16.7 & \textbf{22.5}
                    & 28.1 & 15.0 & 125.0
                    & 31.5 & 15.0 & 77.0 \\
$\pi_0$             & 11.0 & 16.7 & 263.3
                    & 66.7 & 33.3 & 229.7
                    & 10.0 & 10.0 & 280.0
                    & 29.6 & 20.0 & 252.5 \\
\midrule
Exhaustive          & 64.2 & \underline{44.0} & 105.4
                    & \textbf{96.0} & \underline{22.2} & 106.7
                    & \underline{52.8} & \textbf{45.0} & 183.0
                    & \textbf{78.2} & 30.0 & 297.0 \\
VLM+$\pi_0$         & 31.4 & 27.8 & 322.3
                    & 33.2 & 16.7 & 281.8
                    & 28.2 & 15.0 & 382.0
                    & 14.8 & 10.0 & 374.7 \\
\bottomrule
\end{tabular}
}
\label{tab:franka_absolute_metrics}
\end{table}

We find that $\pi_{0}$, a generalist foundation policy for manipulation tasks, is not good at searching tasks such as finding target objects in a cluttered scene from just a wrist camera (see in \cref{fig:all_tasks}). 
$\pi_{0}$ is fundamentally limited by not having memory, which hampers its ability to handle POMDPs.
We now evaluate our approach's ability to generate helper policies that condition on history (see \cref{app:real_active_perception_details}) to handhold $\pi_{0}$ policies up to a configuration from which they could reasonably succeed. To achieve this, we propose a switching framework for the policy where an active perception policy is first run, and an object detector periodically checks the wrist image for the target object. Once the object is detected across two consecutive intervals, the robot switches to $\pi_{0}$ to grasp the object.

We set up four realistic active perception tasks where the robot must search through a cluttered scene to find a target object and grasp it up (see \cref{fig:all_tasks}). 
In the \textbf{Bookshelf-P} and \textbf{Bookshelf-D} tasks, we place a target object (either a toy pineapple or duck) on one of the three shelves, requiring the robot to scan the shelves both vertically and horizontally and stop at good viewpoint. 
In the \textbf{Shelf-Cabinet} task, we add an additional cabinet with drawers and hiding spots near its top and inside cabinet door, increasing the complexity of the search. 
Finally, in \textbf{Complex} task, we add an additional shelf on the floor, whose shelves are completely occluded from all side camera viewpoints.

We compare against several baselines. \textbf{Exhaustive} is an engineered controller that methodically goes through all hiding spots in the scene, but is slow due to its heuristic pathfinding. Next, we again compare to unprivileged AWR and BC as helper policies. We compare against $\pi_{0}$ itself, as well as a \textbf{VLM+$\pi_{0}$} variant that queries the Gemini-2.5 VLM \citep{team2023gemini} to generate language commands that are executed by $\pi_{0}$, similar to the Hierarchical VLM-VLA baseline proposed in \citep{shi2025hirobot}. As mentioned earlier in \cref{sec:task_setups}, we use the \textbf{Search} metric to judge the searching behavior, \textbf{Completion} metric to measure how useful is the handoff for $\pi_0$, and \textbf{Steps} for speed. 
See \cref{app:real_active_perception_details} for more details.

\begin{figure}[t]
    \centering
    \includegraphics[width=\textwidth]{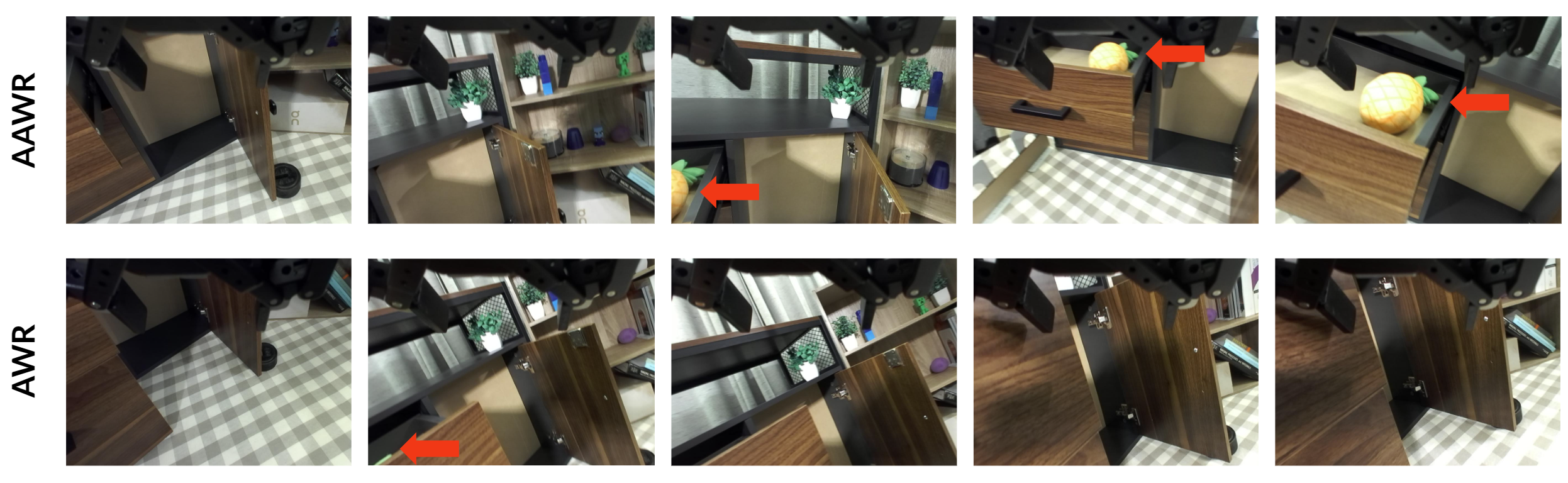}
    \caption{Example rollouts in the cabinet shelf task. AAWR explores the top left of the cabinet and locates the target object (red arrow) while AWR only finds a small glimpse before drifting away.}
    \label{fig:franka_kitchen_rollouts}
\end{figure}
For the \textbf{Bookshelf-P, Bookshelf-D} tasks, we collected 250 demonstrations split among four demonstrators with varying qualities, resulting in a suboptimal dataset of about 50\% $\pi_{0}$ success rate. For \textbf{Shelf-Cabinet, Complex}, we collected smaller but higher quality datasets of 35 trajectories with 74\% success rate and 50 trajectories with 94\% success rate respectively. We perform offline training of AAWR, AWR, and BC on the same demonstration dataset, and for the same amount of gradient steps, before evaluating in the real world.

As seen in \cref{tab:franka_active_perception}, AAWR consistently outperforms baselines in all metrics, learning sensible active perception behavior to aid a generalist policy. AAWR always outperforms non privileged AWR and BC, thus validating the usefulness of privileged information and the use of offline RL over supervised learning. 
We find that $\pi_{0}$ and VLM+$\pi_{0}$ both search poorly - they tend to take inefficient movements and fail to track the object. Exhaustive has decent search and completion rates, but takes much longer than AAWR. In fact, when normalized for time taken, AAWR scores 2-8 times higher in Search and Completion metrics than Exhaustive. See \cref{tab:relative_efficiency} for the time-normalized relative metrics. 

In the \textbf{Bookshelf} tasks, AAWR first learns to zoom out of the scene to see multiple shelves, then scans from bottom to up, and then approaches the target object once located. The AWR and BC baselines follow a relatively fixed search path that approaches the shelf, but the policies failed to efficiently scan the shelves. Even if the target object appears in the frame, the policies do not fixate on the object, reducing their search score and $\pi_{0}$ success rate.
In the \textbf{Shelf-Cabinet} task, AAWR searches through the right bookshelf, before moving to the left cabinet. Both AWR and BC do not thoroughly search the scene, preventing them from finding objects placed in the left cabinet's drawer. In \textbf{Complex}, AAWR searches the bottom shelf, the right shelf, and then the left cabinet (see \cref{fig:concept}). See \cref{fig:franka_kitchen_rollouts} and the website for examples.

\section{Conclusion and Limitations}
We aim to train active perception policies in the real world to allow robots to overcome their sensory limitations, a useful problem for which current approaches have shown limited success. We propose asymmetric advantage weighted regression (AAWR), a simple weighted behavior cloning technique that leverages privileged observations during training to efficiently train active perception policies. We provide a theoretical justification for AAWR, by deriving the validity of the AAWR objective for POMDP as opposed to its symmetric counterpart. Then, we show that AAWR successfully learns active and interactive perception behaviors in 8 different simulated and real world tasks. 

Despite our promising results, there is much room for improvement. 
Instead of switching between policies to execute active perception behaviors, AAWR could be used to directly fine-tune generalist foundation policies.  Other forms of privileged information could be considered, such as foundation model outputs. Instead of using prespecified privileged information, useful features from the additional information could also be explicitly selected through representation learning. Because this work overcomes limitations of existing methods for active perceptions tasks, it would be worth exploring the scalability of AAWR to tasks with longer horizons where information-gathering challenges compound. 
Finally, AAWR could be applied to many other partially observed robotic (or other) tasks.

\section*{Acknowledgments}

This work was supported by the DARPA TIAMAT HR0011249042,  NSF CAREER 2239301, ONR N00014-22-1-2677, and NSF SLES 2331783 grants. It used computing resources from the National Artificial Intelligence Research Resource Pilot (NAIRR 240077). 
Gaspard Lambrechts is a postdoctoral researcher of the \emph{Fund for Scientific Research} (FNRS) from the \emph{Wallonia-Brussels Federation} in Belgium. The authors would like to thank the GRASP lab, anonymous NeurIPS reviewers, Antonio Loquercio, Kostas Daniilidis, Lei Zhang, Tianyou Wang, Siyu Li and Jiahui Lei for constructive feedback and support. The authors thank James Springer, Will Liang, Johnny Wang, Sam Wang and Tau Robotics for robot support.

\bibliography{refs,bib/cogsci,bib/recognition, bib/drl,bib/detection,bib/deep_learning,bib/rl,bib/hrl,bib/env,bib/gnn,bib/sim2real,bib/robotics,bib/goal_directed_rl,bib/meta,bib/imitation, bib/keypoint,bib/mbrl, bib/contingency,bib/multiview,bib/foundation_models}
\bibliographystyle{unsrtnat}

\clearpage
\appendix

\section{AAWR Implementation Details}
\label{app:aawr_implementation}

Instead of the standard policy evaluation of the AWR algorithm \citep{peng2019advantage, nair2020awac, wang2020critic}, we use implicit Q-learning (IQL)~\citep{kostrikov2022offline} and its optimistic policy evaluation, for its effectiveness in offline, and offline-to-online \citep{park2024ogbench} and real robots \citep{feng2023finetuning} tasks.
The IQL algorithm learns both a value function $V^\mu_\theta$ and a critic $Q^\mu_\phi$.
Extending IQL to our POMDPs, in the symmetric setting, the unprivileged advantage estimator would be $\hat{A}_{QV}^\mu(z_t, a_t) = Q^\mu_\phi(z_t, a_t) - V^\mu_\theta(z_t)$.
In the asymmetric setting, the privileged advantage estimator would instead be $\hat{A}_{QV}^\mu(s_t, z_t, a_t) = Q^\mu_\phi(s_t, z_t, a_t) - V^\mu_\theta(s_t, z_t)$.

The privileged Q-function is trained using the 1-step TD error.
\begin{equation}
\label{eq:Q_loss}
     \mathcal{L}_Q(\phi) = \mathop{\mathbb{E}}_{(s_t, z_t, r_t, s_{t+1}, z_{t+1})\sim \mathcal{D}} \bigg[ (r_t + \gamma V^\mu_\theta(s_{t+1}, z_{t+1}) - Q^\mu_\phi(s_t, z_t, a_t))^2 \bigg]
\end{equation}
The privileged value function is trained to conservatively approximate the maximization $\max_{a} Q^\mu_\phi(s,z,a)$ using an asymmetric $L_2$ loss (expectile regression):
\begin{equation}
\label{eq:V_loss}
     \mathcal{L}_V(\theta) = \mathop{\mathbb{E}}_{(s_t, z_t, a_t) \sim \mathcal{D}}\bigg[ |\tau - \mathbbm{1}_{\{ Q^\mu_\phi(s_t, z_t, a_t) - V^\mu_\theta(s_t, z_t) < 0\}} |(Q^\mu_\phi(s_t, z_t, a_t) - V^\mu_\theta(s_t, z_t))^2\bigg]
\end{equation}
where $\tau \in (0,1)$ is the expectile. 
As $\tau \rightarrow 0$, the loss increasingly penalizes overestimates of $V$.
The unprivileged value functions are trained analogously.

\begin{wrapfigure}[11]{r}{0.57\textwidth}
\begin{minipage}{0.57\textwidth}
\vspace{-0.9cm}
\begin{algorithm}[H]
\caption{AAWR Offline-to-Online}
\label{alg:aawr}
\begin{algorithmic}[1]
\Require policy $\pi$, critics $Q,V$, buffers $\mathcal{D}_{\text{off}}, \mathcal{D}_{\text{on}}$
\For{$i = 1$ to $N_\text{off}$}
    \State Update $Q,V$ using $\mathcal{D}_\text{off}$ and Eq.~\ref{eq:Q_loss} and Eq.~\ref{eq:V_loss}
    \State Update $\pi$ using $\mathcal{D}_\text{off}$ and Eq. \ref{eq:aawr_loss}
\EndFor
\For{$i = 1$ to $N_{\text{on}}$}
    \State Collect $\{(o_t, o^+_t, a_t, r_t, o_{t+1}, o^+_{t+1})\}^T_{t=1}$ with $\pi$
    \State $\mathcal{D}_\text{on} \leftarrow \mathcal{D}_\text{on} \cup \{(o_t, o^+_t, a_t, r_t, o_{t+1}, o^+_{t+1})\}^T_{t=1}$ 
    \State Update $Q,V$ using $\mathcal{D}_\text{on},\mathcal{D}_\text{off}$ and Eq.~\ref{eq:Q_loss}, Eq.~\ref{eq:V_loss}
    \State Update $\pi$ using $\mathcal{D}_\text{on},\mathcal{D}_\text{off}$ and Eq.~\ref{eq:aawr_loss}
\EndFor
\end{algorithmic}
\end{algorithm}
\end{minipage}
\vspace{-0.5cm}
\end{wrapfigure}
Note that when $\tau = 0.5$, it corresponds to the standard $1$-step TD update.
In \cref{app:aliasing}, we show that the privileged value functions are the fixed point of the Bellman equations described by Eq.~\ref{eq:Q_loss} and Eq.~\ref{eq:V_loss}. In contrast, we show that the unprivileged value functions are not the fixed point of their corresponding Bellman equations, which further motivates the use of AAWR instead of SAWR.

We train the models in an offline-to-online manner. Following lines 1-3 of \cref{alg:aawr}: during the offline stage, $Q, V, \pi$ are updated $\mathcal{D}_\text{off}$ for $N_\text{off}$ gradient steps on the offline dataset. After offline training, on-policy trajectories are collected by executing the target policy in the environment. These trajectories are added to the online buffer $\mathcal{D}_\text{on}$. Following \cite{feng2023finetuning}, we use symmetric sampling where 50\% of samples are from $\mathcal{D}_\text{off}$ and the other 50\% are from $\mathcal{D}_\text{on}$. 

\section{Advantage Weighted Regression Estimators} \label{app:awr}

The original AWR algorithm \citep{peng2019advantage} used a return-based estimate of the advantage $\hat{A}_{V}^\mu(s_t, a_t) = \sum_{k=0}^\infty \gamma^k r_{t+k} - \hat{V}^\mu(s_t)$ where $\hat{V}^\mu(s_t)$ is an approximation of the value function, learned by minimization of,
\begin{equation}
    \label{eq:v_mc}
    \mathcal{L}_\text{MC}(\hat V) = \mathop{\mathbb{E}}_{s_t \sim d_\mu(s_t)} \bigg[ \big( \sum_{k=0}^\infty \gamma^k r_{t+k} - \hat{V}(s_t) \big)^2 \bigg].
\end{equation}
Future works have instead used a critic-based estimate of the advantage $\hat{A}_{Q}^\mu(s_t, a_t) = Q^\mu_\phi(s_t, a_t) - \mathop{\mathbb{E}}_{a \sim \pi(a \mid s_t)} Q^\mu_\phi(s_t, a)$ \citep{nair2020awac, wang2020critic} where $Q^\mu_\phi(s_t, a_t)$ is an approximation of the critic function, learned by minimization of,
\begin{equation}
    \label{eq:q_td}
    \mathcal{L}_\text{TD}(\hat Q)
    =
    \mathop{\mathbb{E}}_{s_t \sim d_\mu(s_t)}
    \mathop{\mathbb{E}}_{a_t \sim \pi(a_t \mid s_t)}
    \bigg[ \big( r_t + \gamma \hat{Q}'(s_{t+1}, a_{t+1}) - \hat{Q}(s_t, a_t) \big)^2 \bigg].
\end{equation}
The typical learning procedure of AWR is summarized in \cref{alg:awr}.

\begin{figure}[ht]
\centering
\begin{minipage}{0.57\textwidth}
\vspace{-0.8em}
\begin{algorithm}[H]
\caption{Advantage Weighted Regression}
\label{alg:awr}
\begin{algorithmic}[1]
\Require policy $\pi$, critic $V$, buffer $\mathcal{D}_{\text{on}}$
\For{$i = 1$ to $N_{\text{on}}$}
    \State Collect $\{(s_t, a_t, r_t, s_{t+1})\}^T_{t=1}$ with $\pi$
    \State $\mathcal{D}_\text{on} \leftarrow \mathcal{D}_\text{on} \cup \{(s_t, a_t, r_t, s_{t+1})\}^T_{t=1}$ 
    \State Update $V$ using $\mathcal{D}_\text{on}$ and minimizing Eq.~\ref{eq:v_mc}
    \State Update $\pi$ using $\mathcal{D}_\text{on}$ and maximizing Eq.~\ref{eq:awr_loss}
\EndFor
\end{algorithmic}
\end{algorithm}
\end{minipage}
\vspace{-0.5em}
\end{figure}

\section{Asymmetric Advantage Weighted Regression Derivation} \label{app:derivation}

In this section, we derive the AWR objective for POMDPs, which results in the AAWR objective. Since we consider a POMDP and an agent state $f\colon \mathcal{H} \rightarrow \mathcal{Z}$, we can consider the equivalent environment-agent state MDP \citep{dong2022simple, lu2023reinforcement, sinha2024agent}, whose state is $(s, z)$, and restrain the class of fully observable policies $\Pi^+ = \mathcal{S} \times \mathcal{Z} \rightarrow \Delta(\mathcal{A})$ for this MDP to the agent-state policies,
\begin{align}
    \Pi^- = \left\{
        \pi^+ \in \Pi^+
        \mid
        \exists \pi \in \Pi : \pi^+(a \mid s, z) = \pi(a \mid z),
        \forall s \in \mathcal{S}, \;
        \forall z \in \mathcal{Z}, \;
        \forall a \in \mathcal{A}
    \right\}\!.
\end{align}
Since $\Pi^- \subseteq \Pi^+$, we have $\pi^- \in \Pi^+$ and we can derive the AWR objective using this restricted set of policies following similar steps as \citet{peng2019advantage}. In the following, we use $\pi \in \Pi$ to denote the partially observable policy $\pi(a \mid z)$ corresponding to $\pi^- \in \Pi^-$ with $\pi^-(a \mid s, z) = \pi(a \mid z)$.
Before deriving the AWR objective for the equivalent environment-agent state MDP, let us define the (normalized) discounted visitation measure of a policy $\pi \in \Pi$ in this MDP as:
\begin{equation}
    d_{\pi}(s, z) = (1 - \gamma) \sum_{t=0}^\infty \gamma^t p(s_t = s, z_t = z).
\end{equation}
In the following, we denote the current policy $\pi^-_k$ with $\mu^-$, and its corresponding partially observable policy with $\mu$
Since we work in the environment-agent state MDP, we consider the usual value functions definitions, where the state is $(s, z)$,
\begin{align}
    Q^{\mu^-}(s, z, a) &= {\mathbb{E}}^{\mu^-} \left[ \sum_{t=0}^\infty \gamma^t r_t \middle| s_0 = s, z_0 = z, a_0 = a \right]
    \label{eq:q_definition}
    \\
    V^{\mu^-}(s, z) &= {\mathbb{E}}^{\mu^-} \left[ \sum_{t=0}^\infty \gamma^t r_t \middle| s_0 = s, z_0 = z \right]\!.
    \label{eq:v_definition}
\end{align}

Now, let us derive an objective to improve the policy. We want to maximize the policy improvement,
\begin{align}
    \eta(\pi^-) = J(\pi^-) - J(\mu^-).
\end{align}
The policy improvement is related to the advantage function of $\mu^-$ \citep{kakade2002approximately},
\begin{align}
    \eta(\pi^-)
    &=
    \mathbb{E}^{\pi^-} \! \left[ \sum_{t = 0}^\infty \gamma^t r_t \right]
    -
    \mathbb{E}^{\mu^-} \! \left[ \sum_{t = 0}^\infty \gamma^t r_t \right]
    \\
    &=
    \mathbb{E}^{\pi^-} \! \left[ \sum_{t = 0}^\infty \gamma^t r_t \right]
    -
    \mathop{\mathbb{E}} \left[ V^{\mu^-}(s_0, z_0) \right]
    \\
    &=
    \mathbb{E}^{\pi^-} \! \left[
        \sum_{t = 0}^\infty \gamma^t r_t
        -
        V^{\mu^-}(s_0, z_0)
    \right]
    \\
    &=
    \mathbb{E}^{\pi^-} \! \Biggl[
        \sum_{t = 0}^\infty \gamma^t (r_t + \gamma V^{\mu^-}(s_{t+1}, z_{t+1}) - V^{\mu^-}(s_t, z_t))
        +
        V^{\mu^-}(s_0, z_0)
        -
        V^{\mu^-}(s_0, z_0)
    \Biggr]
    \\
    &=
    \mathbb{E}^{\pi^-} \! \left[
        \sum_{t = 0}^\infty \gamma^t (r_t + \gamma V^{\mu^-}(s_{t+1}, z_{t+1}) - V^{\mu^-}(s_t, z_t))
    \right]
    \\
    &=
    \mathbb{E}^{\pi^-} \! \left[
        \sum_{t = 0}^\infty \gamma^t A^{\mu^-}(s_t, z_t, a_t)
    \right]
    \\
    &=
    \sum_{t = 0}^\infty \gamma^t
    \mathbb{E}^{\pi^-} \! \left[
         A^{\mu^-}(s_t, z_t, a_t)
    \right]
    \\
    &=
    (1 - \gamma)
    \mathop{\mathbb{E}}_{(s, z) \sim d_{\pi}(s, z)} \mathop{\mathbb{E}}_{a \sim \pi(a \mid z)} \! \left[
         A^{\mu^-}(s, z, a)
    \right]
\end{align}

where $A^{\mu^-}(s, z, a) = Q^{\mu^-}(s, z, a) - V^{\mu^-}(s, z)$.
The objective $\eta(\pi^-)$ may be inefficient to optimize, due to the dependence of the expectation on $\pi^-$ through $d_{\pi}$.
Instead, we thus choose to optimize an off-policy surrogate objective where the samples are generated from policy $\mu$,
\begin{align}
    \hat{\eta}(\pi^-)
    &=
    (1 - \gamma) \mathop{\mathbb{E}}_{(s, z) \sim d_{\mu}(s, z)} \mathop{\mathbb{E}}_{a \sim \pi(a \mid z)} \! \left[ A^{\mu^-}(s, z, a) \right]
\end{align}
In practice, we thus seek to approximately solve the following constrained optimization problem at each iteration,
\begin{align}
    \label{eq:extended_objective}
    \pi_{k+1}^-
    \in \argmax_{\pi^- \in \Pi^-}
    & \; \mathop{\mathbb{E}}_{(s, z) \sim d_{\mu}(s, z)} \mathop{\mathbb{E}}_{a \sim \pi(a \mid z)} \! \left[ A^{\mu^-}(s, z, a)) \right]
    \\
    \text{s.t.\@}
    & \; \text{KL}(\pi^-(\cdot \mid s, z) \parallel \mu^-(\cdot \mid s, z)) \leq \epsilon
\end{align}

Now that we have identified the desired constrained optimization problem, let us prove \autoref{thm:aawr}, which states that its relaxation corresponds to the AAWR objective.

\aawr*
\begin{proof}
Starting from Eq.~\ref{eq:extended_objective}, we note that two additional constraints are hidden in $\pi^- \in \Pi^-$.
The first one is $\int_{a \in \mathcal{A}} \pi(a \mid s, z) = 1, \; \forall s \in \mathcal{S}, \; \forall z \in \mathcal{Z}$.
The second one is $\pi^- \in \Pi^- \subseteq \Pi^+$, or equivalently, $\pi^-(a \mid s_1, z) = \pi^-(a \mid s_2, z), \; \forall s_1, s_2 \in \mathcal{S}, \; \forall z \in \mathcal{Z}, \; \forall a \in \mathcal{A}$.

Following similar steps as \citet{peng2019advantage}, by relaxing the KL constraint in a Lagrangian multiplier with multiplier $\beta$,
\begin{align}
    \pi^*(a \mid s, z)
    &\propto \mu^-(a \mid s, z) \exp \left( \frac{1}{\beta} A^{\mu^-}(s, z, a) \right) \\
    &\propto \mu(a \mid z) \exp \left( \frac{1}{\beta} A^{\mu^-}(s, z, a) \right)
\end{align}
We now substitute back the two additional constraints as additional constraints, so that we project the solution on the manifold of policies $\Pi^-$.
By minimizing the KL divergence $\mathop{\mathbb{E}}_{(s, z) \sim d_\mu(s, z)} \left[ \text{KL}(\pi^*(\cdot \mid s, z) \parallel \pi(\cdot \mid z) \right]$ to that target, we obtain
\begin{align}
    \pi_{k+1}
    &=
    \argmax_{\pi \in \Pi}
    \mathop{\mathbb{E}}_{(s, z) \sim d_{\mu}(s, z)}
    \mathop{\mathbb{E}}_{a \sim \mu(a \mid z)}
    \left[
        \log \pi(a \mid z) \exp \left( \frac{1}{\beta} A^{\mu^-}(s, z, a) \right)
    \right]
    \label{eq:unparametrized_aawr}
\end{align}
This concludes the proof, for $\mu = \pi_k$.
\end{proof}
With the additional constraint that we use a parametrized policy $\pi_\theta \in \Pi_\Theta$, we obtain
\begin{align}
    \theta_{k+1}
    &=
    \argmax_{\theta \in \Theta}
    \mathop{\mathbb{E}}_{(s, z) \sim d_{\mu}(s, z)}
    \mathop{\mathbb{E}}_{a \sim \mu(a \mid z)}
    \left[
        \log \pi_\theta(a \mid z) \exp \left( \frac{1}{\beta} A^{\mu^-}(s, z, a) \right)
    \right]
    \label{eq:parametrized_aawr}
\end{align}
Eq.~\ref{eq:unparametrized_aawr} corresponds to the AAWR objective, and Eq.~\ref{eq:parametrized_aawr} is the AAWR objective in the context of parametrized function approximators.

\section{Problem with Symmetric Advantage Weighted Regression} \label{app:symmetric}

While the asymmetric value functions followed the classical definitions in the environment-agent state MDP, the symmetric value functions are not standard because $z$ is not a Markovian state.
We select the following definition: $A^\mu(z, a) = Q^\mu(z, a) - V^\mu(z)$ with,
\begin{align}
    Q^{\mu^-}(z, a) &= \mathop{\mathbb{E}}_{s \sim d_{\mu^-}(s \mid z)} \left[ {\mathbb{E}}^{\mu^-} \left[ \sum_{t=0}^\infty \gamma^t r_t \middle| s_0, = s, z_0 = z, a_0 = a \right] \right]
    \label{eq:sym_q_definition}
    \\
    V^{\mu^-}(z) &= \mathop{\mathbb{E}}_{s \sim d_{\mu^-}(s \mid z)} \left[ {\mathbb{E}}^{\mu^-} \left[ \sum_{t=0}^\infty \gamma^t r_t \middle| s_0 = s, z_0 = z \right] \right]
    \label{eq:sym_v_definition}
\end{align}
By definition, this choice provides unbiased symmetric value functions under the distribution $d_{\mu^-}(s, z)$ induced by the current policy $\mu^-$.
\begin{align}
    Q^{\mu^-}(z, a) &= \mathop{\mathbb{E}}_{s \sim d_{\mu^-}(s \mid z)} \left[ Q^{\mu^-}(s, z, a) \right]
    \label{eq:sym_q_unbiasedness}
    \\
    V^{\mu^-}(z) &= \mathop{\mathbb{E}}_{s \sim d_{\mu^-}(s \mid z)} \left[ V^{\mu^-}(s, z) \right]
    \label{eq:sym_v_unbiasedness}
\end{align}

Let us now prove \cref{thm:sawr}, which proves that the symmetric AWR (SAWR) objective is different from the asymmetric AWR (AAWR) objective.
\begin{restatable}[Symmetric Advantage Weighted Regression]{theorem}{sawr}
    \label{thm:sawr}
    In general, for a POMDP and an agent state $f\colon \mathcal{H} \rightarrow \mathcal{Z}$, we have
    $
        \argmax_{\pi \in \Pi} \mathcal{L}_\text{SAWR} \neq \argmax_{\pi \in \Pi} \mathcal{L}_\text{AAWR}
    $.
\end{restatable}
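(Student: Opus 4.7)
The plan is to show that the two losses produce different argmax policies by writing each as a weighted log-likelihood objective, isolating the effective per-$(z,a)$ weight, and then exhibiting a counterexample POMDP in which Jensen's inequality makes those weights disagree on the $a$-dependence.

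First, I would rewrite $\mathcal{L}_\text{AAWR}$ by applying the tower property over $d_\mu(s,z) = d_\mu(z)\, d_\mu(s \mid z)$, pulling $\log \pi(a \mid z)$ outside of the inner expectation over $s$:
\begin{align*}
\mathcal{L}_\text{AAWR}(\pi)
=
\mathop{\mathbb{E}}_{z \sim d_\mu(z)}
\mathop{\mathbb{E}}_{a \sim \mu(a \mid z)}
\left[
  w_\text{A}(z,a)\,\log \pi(a \mid z)
\right],
\qquad
w_\text{A}(z,a) := \mathop{\mathbb{E}}_{s \sim d_\mu(s \mid z)} \exp\!\big(A^\mu(s,z,a)/\beta\big).
\end{align*}
By contrast, the SAWR weight is $w_\text{S}(z,a) = \exp(A^\mu(z,a)/\beta)$, and by the unbiasedness relations (\ref{eq:sym_q_unbiasedness})--(\ref{eq:sym_v_unbiasedness}) this equals $\exp\!\big(\mathop{\mathbb{E}}_{s \sim d_\mu(s \mid z)}[A^\mu(s,z,a)]/\beta\big)$. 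Since $\exp$ is strictly convex, Jensen's inequality gives $w_\text{A}(z,a) \geq w_\text{S}(z,a)$ with strict inequality whenever $A^\mu(\cdot,z,a)$ is non-constant in $s$ under $d_\mu(s \mid z)$.

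Next, I would characterize the argmax in closed form. For unconstrained $\pi \in \Pi$ (tabular agent-state policy class), the first-order conditions for maximizing a weighted log-likelihood of the form $\mathbb{E}_{z}\mathbb{E}_{a \sim \mu(\cdot \mid z)}[w(z,a)\log \pi(a \mid z)]$ yield $\pi^\star(a \mid z) \propto \mu(a \mid z)\, w(z,a)$. Applying this to both objectives, $\pi^\star_\text{AAWR}$ and $\pi^\star_\text{SAWR}$ coincide iff $w_\text{A}(z,\cdot)/w_\text{S}(z,\cdot)$ is constant in $a$ (for each $z$ in the support).

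Finally, to establish the strict inequality of argmaxes \emph{in general}, I would construct a minimal counterexample: a POMDP with two hidden states $s_1,s_2$ of equal visitation mass, a single agent state $z$ (so the observation is fully uninformative and $\Pi = \Delta(\mathcal{A})$), two actions $a_1, a_2$, and a uniform behavior $\mu$. Choose the rewards/dynamics so that $A^\mu(s_1,z,a_1) \neq A^\mu(s_2,z,a_1)$ while $A^\mu(s_1,z,a_2) = A^\mu(s_2,z,a_2)$, with matching conditional means $\mathbb{E}_s[A^\mu(s,z,a_1)] = \mathbb{E}_s[A^\mu(s,z,a_2)]$. Then $w_\text{S}(z,a_1) = w_\text{S}(z,a_2)$, so $\pi^\star_\text{SAWR} = \mu$, whereas strict Jensen gives $w_\text{A}(z,a_1) > w_\text{A}(z,a_2)$, so $\pi^\star_\text{AAWR}(a_1 \mid z) > \mu(a_1 \mid z)$, yielding $\pi^\star_\text{AAWR} \neq \pi^\star_\text{SAWR}$.

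The main obstacle I anticipate is exhibiting a concrete POMDP where the \emph{on-policy} visitation $d_\mu(s \mid z)$, together with reward and transition kernels, realizes the chosen advantage pattern; one typically needs a multi-step POMDP because single-step bandit advantages collapse the distinction. I would therefore design a two-step task in which the hidden $s$ controls which action is actually rewarding but is aliased in $z$, and verify by direct computation of $Q^\mu$ and $V^\mu$ from definitions (\ref{eq:q_definition})--(\ref{eq:v_definition}) that the resulting advantage profile satisfies the required equal-mean / unequal-spread condition, completing the proof.
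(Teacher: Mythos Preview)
Your overall strategy is sound and in fact more careful than the paper's own argument. The paper's proof simply observes that $A^{\mu}(z,a)=\mathbb{E}_{s\sim d_\mu(s\mid z)}[A^{\mu}(s,z,a)]$ and then invokes strict Jensen on the exponential to conclude that the \emph{losses} $\mathcal{L}_\text{SAWR}$ and $\mathcal{L}_\text{AAWR}$ differ whenever $d_\mu(s\mid z)$ is nondegenerate; it does not separately argue that the \emph{argmaxes} differ. Your extra steps---pulling out the per-$(z,a)$ weights $w_\text{A},w_\text{S}$, solving the weighted log-likelihood in closed form as $\pi^\star(a\mid z)\propto\mu(a\mid z)\,w(z,a)$, and then exhibiting a POMDP where the ratio $w_\text{A}/w_\text{S}$ is non-constant in $a$---are exactly what is needed to close that gap.

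However, your concrete two-action counterexample cannot be realized. By definition $V^\mu(s,z)=\mathbb{E}_{a\sim\mu(\cdot\mid z)}[Q^\mu(s,z,a)]$, so $\mathbb{E}_{a\sim\mu}[A^\mu(s,z,a)]=0$ for \emph{every} $(s,z)$. With two actions and uniform $\mu$ this forces $A^\mu(s,z,a_2)=-A^\mu(s,z,a_1)$ for all $s$; hence $A^\mu(\cdot,z,a_2)$ is constant in $s$ iff $A^\mu(\cdot,z,a_1)$ is, contradicting your ``one action has spread, the other does not'' design. Worse, with two equiprobable hidden states this antisymmetry makes the distribution of $A^\mu(\cdot,z,a_1)$ symmetric about zero, giving $w_\text{A}(z,a_1)=w_\text{A}(z,a_2)$ and collapsing your intended separation. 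The fix is easy: either use three actions (e.g., set $A^\mu(s_i,z,a_3)\equiv 0$ and let $a_1,a_2$ carry opposite $s$-dependent spread, so $w_\text{A}(z,a_1)=w_\text{A}(z,a_2)>1=w_\text{A}(z,a_3)$ while all $w_\text{S}=1$), or keep two actions but make $d_\mu(s\mid z)$ non-uniform so that the centered advantage is asymmetric and $\mathbb{E}[e^{X/\beta}]/\mathbb{E}[e^{-X/\beta}]\neq e^{2\mathbb{E}[X]/\beta}$. Either route yields the required $\pi^\star_\text{AAWR}\neq\pi^\star_\text{SAWR}$; your anticipated ``obstacle'' of realizing such advantages in an actual POMDP is then straightforward (a one-step bandit with hidden reward-relevant state already suffices once the advantage pattern respects the zero-mean constraint).
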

\begin{proof}
Combining Eq.~\ref{eq:sym_q_definition} and Eq.~\ref{eq:sym_v_definition}, we have,
\begin{align}
    A^{\mu^-}(z, a) &= \mathop{\mathbb{E}}_{s \sim d_{\mu^-}(s \mid z)} \left[ A^{\mu^-}(s, z, a) \right]
\end{align}
Now, it is straightforward to see that the SAWR objective,
\begin{align}
    \label{eq:sawr_loss_bis}
    \mathcal{L}_\text{SAWR}(\pi) &= \mathop{\mathbb{E}}_{(s, z) \sim d_\mu(s, z)} \mathop{\mathbb{E}}_{a \sim \mu(a \mid z)} \bigg[ \exp \big(A^{\mu^-}(z, a) / \beta \big) \log \pi(a \mid z) \bigg]
    \\
    &= \mathop{\mathbb{E}}_{(s, z) \sim d_\mu(s, z)} \mathop{\mathbb{E}}_{a \sim \mu(a \mid z)} \bigg[ \exp \bigg(\mathop{\mathbb{E}}_{s \sim d_{\mu^-}(s \mid z)} \left[ A^{\mu^-}(s, z, a) \right] / \beta \bigg) \log \pi(a \mid z) \bigg]
\end{align}
does not correspond to the AAWR objective,
\begin{equation}
     \label{eq:aawr_loss_bis}
     \mathcal{L}_\text{AAWR}(\pi) = \mathop{\mathbb{E}}_{(s, z) \sim d_\mu(s, z)} \mathop{\mathbb{E}}_{a \sim \mu(a \mid z)} \bigg[ \exp \big(A^{\mu^-}(s, z, a) \big) \log \pi(a \mid z) \bigg].
\end{equation}
Indeed, we can apply Jensen's strict inequality over the the strictly convex exponential function, under the assumption that the distribution $d_{\mu^-}(s \mid z)$ is nondegenerate.
\end{proof}

\section{Problem with Symmetric Temporal Difference Learning}  \label{app:aliasing}

Let us consider the asymmetric Bellman equations for the environment-agent state MDP,
\begin{align}
    \tilde{Q}^{\mu^-}(s, z, a)
    =
    {\mathbb{E}}^{\mu^-}
    \left[ r_t + \gamma \tilde{Q}^{\mu^-}(s_{t+1}, z_{t+1}, a_{t+1}) \middle| s_t = s, z_t = z, a_t = a \right].
\end{align}
Since the underlying Bellman operator is $\gamma$-contractive, these equations have a unique solution $\widetilde{Q}^{\mu^-}$.
Because the environment and agent states form a Markovian variable $(s, z)$, by definition of $Q^{\mu^-}$, we have $\tilde{Q}^{\mu^-} = Q^{\mu^-}$.
As a consequence, we also have $\tilde{V}^{\mu^-} = V^{\mu^-}$ where $\tilde{V}^{\mu^-}$ is the unique fixed point of its analogous Bellman operator.

Let us now consider the symmetric Bellman equations for the environment-agent state MDP,
\begin{align}
    \tilde{Q}^{\mu^-}(z, a)
    =
    \mathop{\mathbb{E}}_{s' \sim d_{\mu^-}(s' \mid z)}
    \left[
        {\mathbb{E}}^{\mu^-}
        \left[ r_t + \gamma \tilde{Q}^{\mu^-}(z_{t+1}, a_{t+1}) \middle| s_t = s', z_t = z, a_t = a \right]
    \right].
\end{align}
It is interesting to note that, by bootstrapping with $\tilde{Q}^{\mu^-}$, this Q-function considers the distribution of the state $(s_{t+1}, z_{t+1} \mid s_t, z_t, a_t)$ from the second timestep to be $p(z_{t+1} \mid s_t, z_t, a_t) d_{\mu^-}(s_{t+1}|z_{t+1})$ instead of the true distribution $p(s_{t+1}, z_{t+1} \mid s_t, z_t, a_t)$. As a result, by telescoping, we obtain,
\begin{align}
    \tilde{Q}^{\mu^-}(z, a)
    =
    \mathop{\mathbb{E}}_{s \sim d_{\mu^-}(s \mid z)}
    \left[
        \sum_{t=0}^\infty \gamma^t \mathbb{E}^{\mu^-} \left[
            \mathop{\mathbb{E}}_{s_t' \sim d_{\mu^-}(s_t' \mid z_t)}
            r(s_t', z_t, a_t)
            \middle|
            s_0 = s, z_0 = z, a_0 = a
        \right]
    \right]
\end{align}
where $r(s, z, a) = \mathbb{E}[r_t \mid s_t = s, z_t = z, a_t = a]$. It contrasts with the true Q-function, which writes,
\begin{align}
    Q^{\mu^-}(s, z, a)
    &=
    {\mathbb{E}}^{\mu^-} \left[ \sum_{t=0}^\infty \gamma^t r_t \middle| s_0 = s, z_0 = z, a_0 = a \right]
    \\
    &= 
    \mathop{\mathbb{E}}_{s \sim d_{\mu^-}(s \mid z)}
    \left[
        \sum_{t=0}^\infty \gamma^t \mathbb{E}^{\mu^-} \left[
            r(s_t, z_t, a_t)
            \middle|
            s_0 = s, z_0 = z, a_0 = a
        \right]
    \right]
\end{align}
It can be concluded that the unprivileged fixed point $\tilde{Q}^{\mu^-}$ and the unprivileged Q-function ${Q}^{\mu^-}$ can be different, as soon as the distribution $p(s_t, z_t, a_t \mid s_0, z_0, a_0)$ is different from $p(z_t, a_t \mid s_0, z_0, a_0) d^{\mu^-}(s_t \mid z_t)$ at any timestep $t$.

\newpage
\section{Active Perception Experimental Details}
\label{app:real_active_perception_details}

\subsection{Task Definition}
In these tasks, the robot must find objects placed out of view in cluttered environments. Similar to how a human may try to find ingredients in a messy kitchen, the robot must move its camera around occluders, zoom into hard to see spots (i.e. behind drawers), and zoom out to increase its overall view of the scene.

\begin{figure}[h!]
\centering{
\includegraphics[width=0.9\textwidth]{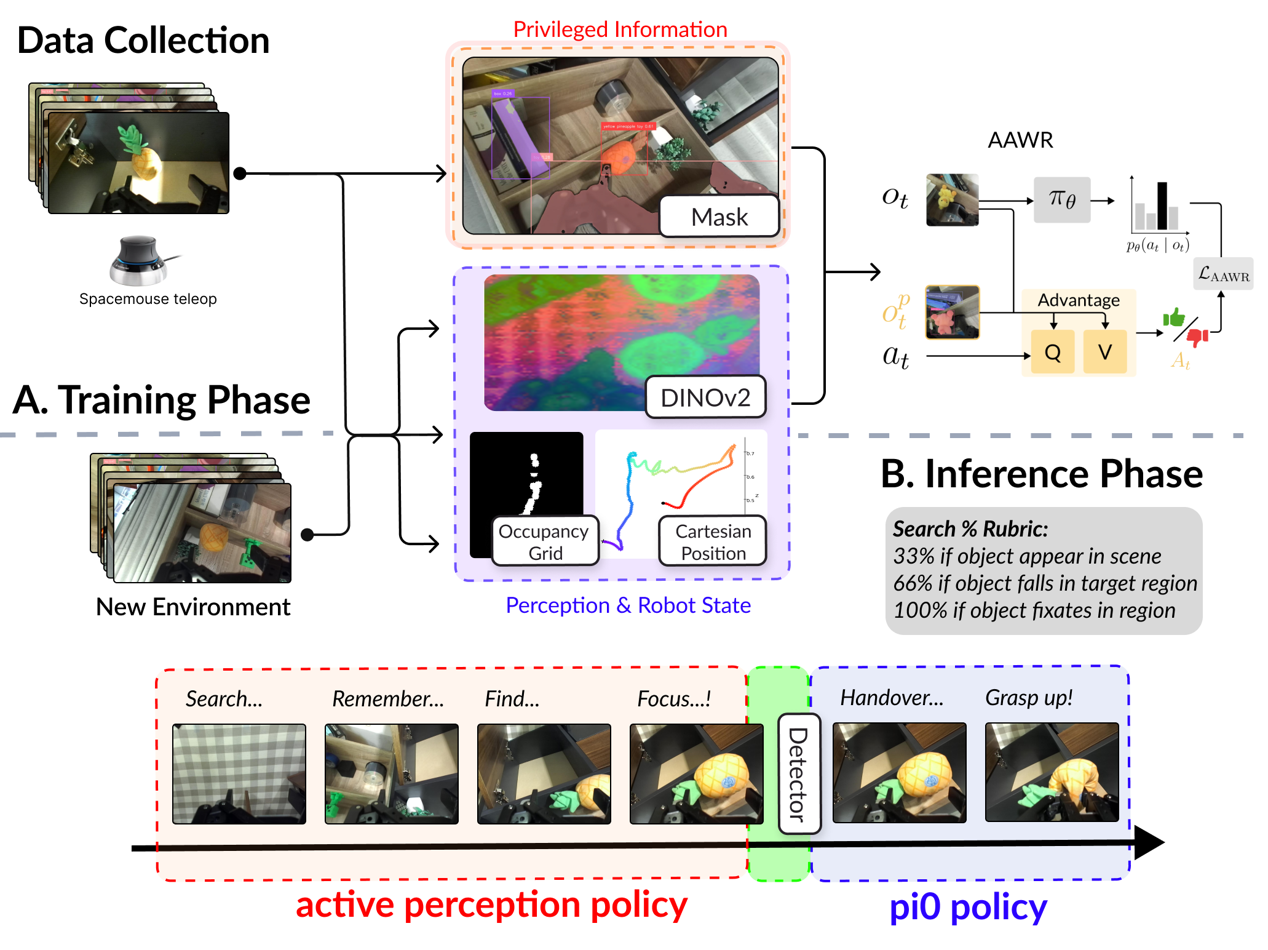}
}
\caption{\textbf{Franka Robot Search Diagram}: A visual overview of the Franka Active Perception experiments. Top half: Offline trajectories are collected through teleoperation collecting privileged masks and unprivileged wrist image and history features. The policy is trained using AAWR. Bottom half: During evaluation, only the unprivileged search policy is run to perform active perception to find the object, before switching to the generalist policy. The policy's search behavior is graded with the rubric.}
\label{fig:franka_sys_diagram}
\end{figure}

In this experiment, we train a ``helper'' active perception policy that searches the scene for the target object, and once located, hands off control to a generalist VLA policy to pick up the object. This addresses a weakness of such generalist policies - that they are typically trained in fully observed situations, and do not generalize to partial observations.

We set up four tasks, ordered in increasing complexity, where the robot must either find a  toy \textit{pineapple} or \textit{duck}.  Objects and visual distractors are randomly placed at the beginning of each episode, the bookshelf and cabinet are placed in same position. 

\begin{itemize}
  \item \textbf{Bookshelf-P} and \textbf{Bookshelf-D}: The robot must find either the pineapple or duck in a three-tier vertical bookshelf with multiple visual distractors (Fig.~\cref{fig:robot_setup}).
  \item \textbf{Shelf-Cabinet}: We make the scene more difficult by adding a cabinet on the left side. The cabinet creates  additional hiding spots on its top drawer and shelf over the drawer.
  \item \textbf{Complex}: In addition to the bookshelf and cabinet, we add a horizontal bookshelf. There are several completely occluded spots, such as the bottom cabinet drawer and objects placed in the horizontal bookshelf.
\end{itemize}

\begin{figure}[h!]
    \centering
    \includegraphics[width=\linewidth]{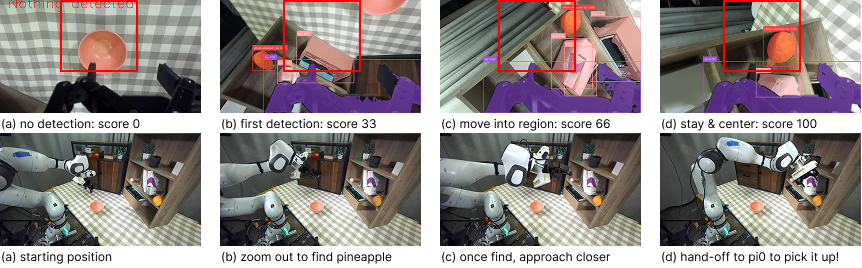}
    \caption{The Search \% metric gives points for spotting, approaching, and fixating on the object.}
    \label{fig:pi0_search_metric}
\end{figure}

\subsection{Metrics}

\textbf{Search\,\%}: a 3-point rubric for grading search behavior, see \cref{fig:pi0_search_metric} for an example.
\begin{enumerate}
    \item Policy spots the target object anywhere in the image \textbf{[33\%]}.
    \item Policy moves until target object falls into the target region of viewpoint \textbf{[66\%]}.
    \item Policy fixates on the target object inside the target region \textbf{[100\%]}.
\end{enumerate}
 
\textbf{Completion}: the grasping success rate of $\pi_{0}$ (100 timestep limit), after switching from the active perception policy. 

\textbf{Steps}: mean number of steps for the policy to complete the first two stages of the rubric (finding and approaching).  Episodes that fail to reach the first two stages count as a timeout ($T_{max} = 300$ steps).

In \cref{tab:relative_efficiency}, we also compute normalized metrics to take into account time efficiency, by dividing Search and Completion by the Steps. We then compare all methods relative to the Exhaustive baseline, by dividing the time-normalized Search and Completion metrics by the Exhaustive baselines' time-normalized Search and Completion.

\subsection{Hardware and Scene Setup}
We used the DROID robot setup\citep{khazatsky2024droid}, which consists of a 7 DoF Franka Emika Panda Robot Arm, a Robotiq 2F-85 parallel-jaw gripper, a wrist-mounted ZED Mini RGB-D camera and two side-mounted ZED 2 stereo cameras. The DROID set-up enables the usage of the generalist VLA policy $\pi_{0}$ \citep{intelligence2025pi}, specifically the FAST-DROID checkpoint.

\begin{figure}[t]
  \centering
  \begin{minipage}[b]{0.48\textwidth}
    \centering
    \includegraphics[width=\linewidth]{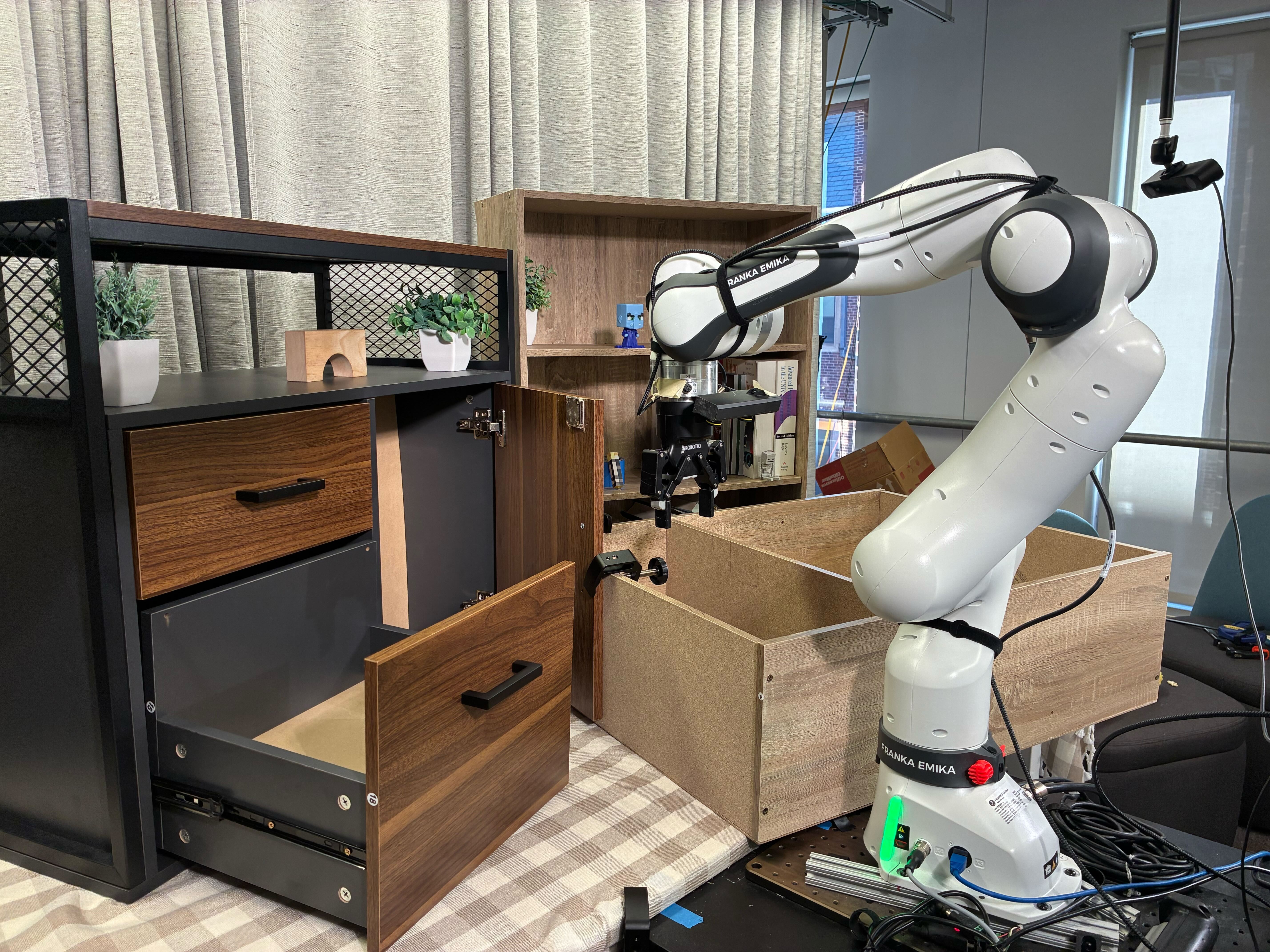}
    \captionof{figure}{Robot hardware configuration: Franka Panda arm with wrist \& side view camera.}
    \label{fig:robot_setup}
  \end{minipage}
  \hfill
  \begin{minipage}[b]{0.48\textwidth}
    \centering
    \includegraphics[width=\linewidth]{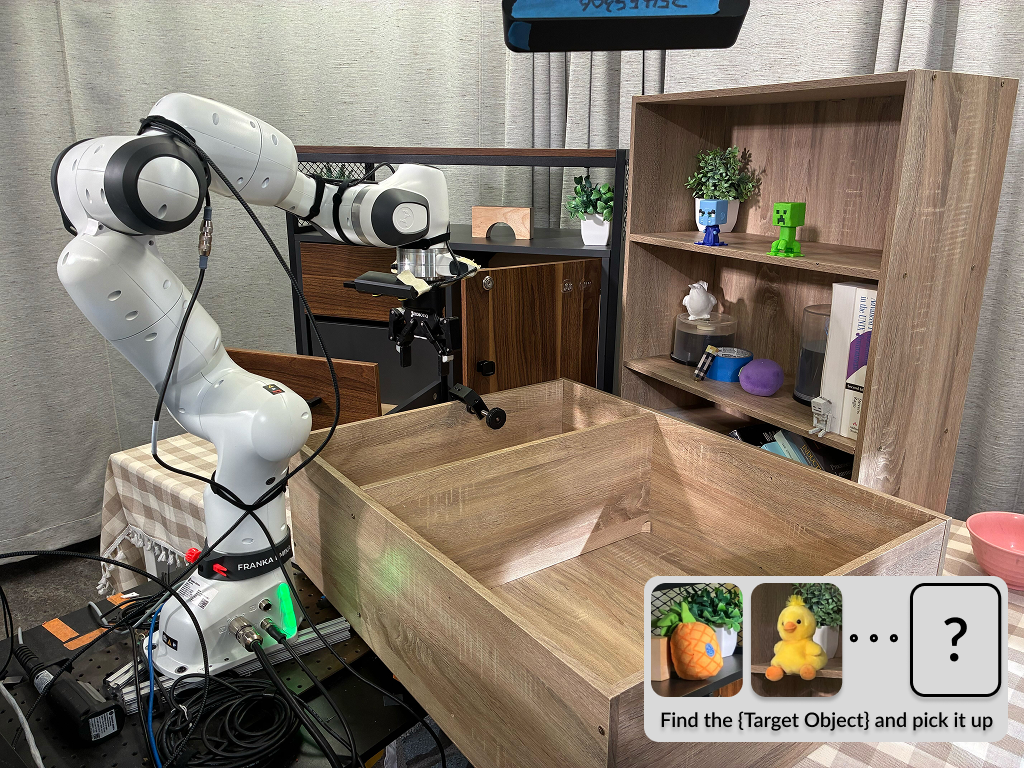}
    \captionof{figure}{The Complex task has  heavy occlusion in the left cabinet, and bookshelf on the floor.}
    \label{fig:task_scene}
  \end{minipage}
\end{figure}

\subsection{Policy Design}

\begin{enumerate}
    \item \textbf{Observation Space}:
        \begin{itemize}
            \item \textbf{Partial observation}: wrist RGB 84$\times$84 image, end effector position of last 6 timesteps, occupancy grid feature.
            \item \textbf{Privileged observation}: Segmentation mask and bounding box of the target object.
        \end{itemize}
    \item \textbf{Action Space}: Cartesian and angular velocities of end effector frame, 
    
    $a_t =[v_x,v_y,v_z,\omega_\mathrm{roll},\omega_\mathrm{pitch},\omega_\mathrm{yaw}]$. We use action chunks of length 5.
\end{enumerate}

The search policy and privileged critic networks are constructed using an encoder / head architecture. We first detail the input processing steps below, as they are shared for AAWR, AWR and BC network: 

The wrist image is first fed into a frozen DINO-V2\citep{oquab2023dinov2} encoder (ViT-S14) , and the resulting DINO-V2 features are reduced into a $256 \times 16$ dimensional latent using PCA.  Next, the occupancy grid feature is constructed by projecting the historical camera rays (inferred through the gripper position) into the XZ dimension of the robot frame, resulting in a 2D occupancy grid where elements are 1 if the camera ray has passed through it in the episode.

The search policy takes in the wrist image features, a history of the last 6 gripper positions, and the occupancy grid feature. The occupancy grid is first processed through a convolutional encoder, and then is concatenated alongside the gripper position history and wrist image features. This latent is then fed into a small MLP to generate the action prediction.

The privileged critic networks also use the same inputs as the search policy, and take in an additional privileged target object segmentation mask. The mask is processed using a small convolutional encoder, and is concatenated with the other inputs.

To handoff from the search policy to $\pi_{0}$, we implement the following logic. Every 5 timesteps, we query an object detector to see if a target object is detected. If the target object detected in the previous query from 5 timesteps ago and the current query, then we handoff to $\pi_{0}$. This consecutive mechanism was implemented to prevent premature switching to $\pi_0$, since the object detector is not perfect and sometimes gives false positives. We find that the consecutive criteria rules out false positives and switches correctly when the target object is within view.

\subsection{Reward Design}
The reward function incentivizes the robot to locate, approach, and fixate on the target object. Viewpoints that score highly under this reward function feature the object prominently in the top-center region of the wrist image. We chose to incentivize putting the object in this target region because the grippers occupy the bottom half of the wrist image, and find that this particular viewpoint optimizes for grasping success of $\pi_0$.

To define the reward, we first need privileged information about the object location and size in the wrist view image. To obtain object detection and segmentation of the target object, we used the DINO-X \citep{ren2024dinoxunifiedvisionmodel} API and the GroundedSAM \citep{ren2024grounded} Model for Open-World Object Detection and segmentation. We use a color segmentation mask to break the tie when the object detector detects multiple potential target objects.

During the training phase, we infer the wrist camera images with DINO-X, obtain the bounding box and mask of the target object, and label the reward for offline RL training. See \cref{fig:reward_example} for examples of the object detection and reward pipeline.

The reward function consists of three terms:

\begin{enumerate}[label=(\roman*),leftmargin=2em]
    \item \textbf{Distance reward}  
          \[
          r_{\text{dist}}
          = 1-\tanh\!\bigl(10\, \cdot \tfrac{ D(c,c^*)}{1000}\bigr),
          \qquad
          D(c, c^*)\in[0,1000]\;\text{px},
          \]
          where $c, c^*$ are the centroids of the bounding box, and target region, and $D$ is the L1 distance. This term smoothly saturates to 1 as the bounding box center approaches the image center.

    \item \textbf{Mask–area reward}  
          \[
          r_{\text{area}}
          = \frac{\operatorname{clip}\bigl(\text{mask\_area},1000,50000\bigr)}{50000},
          \]
          which is proportional to the mask area subject to lower and upper limits, encouraging the robot to find viewpoints where the object is prominently in view. 

    \item \textbf{Mask–overlap reward}  
          \[
          r_{\text{overlap}}
          =\mathbbm{1}\!\bigl[\textit{mask\_overlap}>0.10\bigr],
          \]
          gives a sparse binary bonus once the object mask  intersects with the target region, defined as \(\text{IoU}>10\%\) with a $128\times128$ region centered at $c^*$.

\end{enumerate}

The per‑step reward is then composed as  
\[
\boxed{%
R_t
= 0.5\,r_{\text{dist}}
+ 0.3\,r_{\text{area}}
+ 10\,r_{\text{overlap}}},
\]

\begin{figure}
    \centering
    \includegraphics[width=\linewidth]{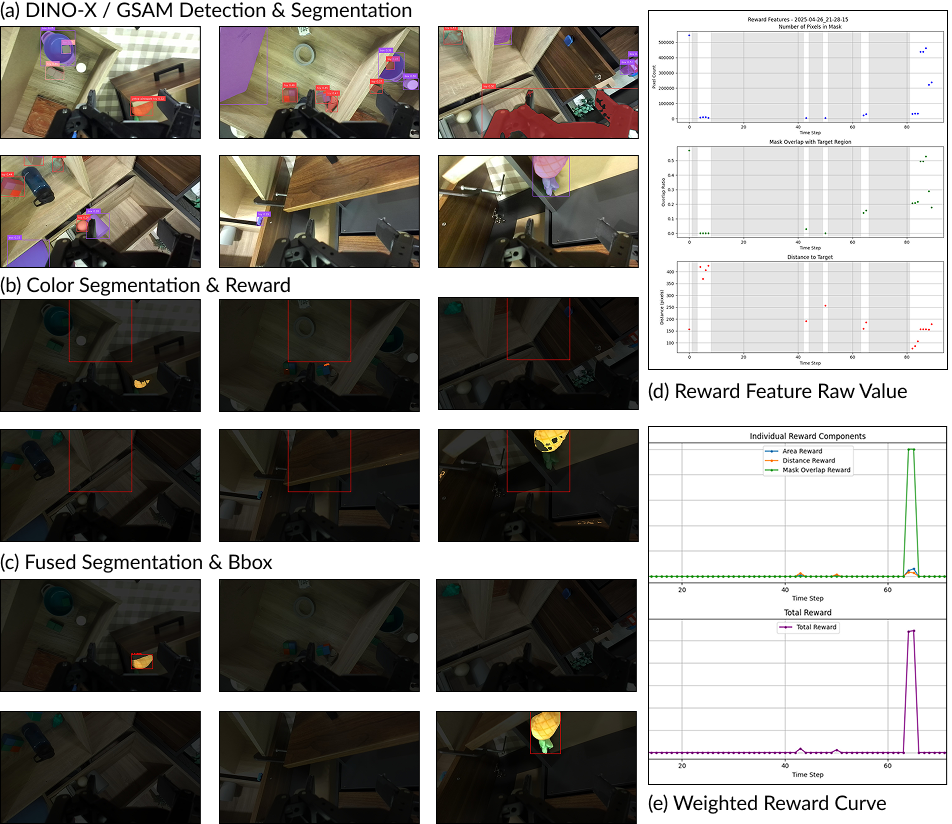}
    \caption{An example reward trajectory in \textbf{Complex} scene. Notice the trajectory is given a high peak reward when the detected pineapple overlap on top of the square area.  }
    \label{fig:reward_example}
\end{figure}

\paragraph{Demonstrations}
We collect demonstrations using the 5‐DoF 3Dconnexion SpaceMouse. During teleoperation, we label trajectories as successful if $\pi_0$ successfully grabs the target object after switching from the teleoperator. The demonstrations are collected with four different teleoperators, with success rates for the demonstrations roughly between 50-70\%. We initially collect up to 250 demonstrations per task, but then we curate the dataset, dropping out trajectories with mislabeled object detections, noisy/faulty sensor readings, etc. After filtering, we end up with 152 demonstrations for Bookshelf-P, 109 for Bookshelf-D, 35 for Shelf-Cabinet, and 195 for Complex.

\subsection{Baselines}
Please watch the videos on our website (\href{https://sites.google.com/view/rwrl-ap/home}{RW-RL Project Page}) to better compare the differences among baselines.

\begin{enumerate}
    \item \textbf{AWR}: Advantage‐Weighted Regression, no access to privileged observations.
    
    \item \textbf{BC}: Filtered Behaviour Cloning, trained on successful trajectories only. 

    \item \textbf{Exhaustive}: Hard‑coded baseline that goes over every possible hiding location in a fixed order. This "brute-force" method gets high Search\% score but takes much longer to search search.  

    \item \textbf{VLM+$\pi_0$}: This baseline decomposes the task using a VLM for high level task planning and the $\pi_{0}$ VLA for low level movement as proposed in HiRobot \citep{shi2025hirobot}. This approach commonly used in works that solve long-horizon tasks with only foundation models. In practice, we query the Gemini-2.5-Flash \citep{team2023gemini} model with a task prompt template, which includes a series of searching-related instructions that the low-level VLA can follow. Then, we ask Gemini to choose among them. The prompt template is attached below.
\end{enumerate}

\begin{lstlisting}[caption={Prompt Template for VLM+$\pi_0$ baseline},
                   label={app:vlm_prompt_template}]                 
You are an expert robot operator using the pi0 policy, a general-purpose robot foundation model. It receives a natural language command and executes on the Franka Panda robot arm.

Your job is to provide natural-language instructions to help a single-armed robot with a parallel-jaw gripper complete a tabletop manipulation task.
---
The overall task is:
find a <|>TARGET_OBJECT_NAME<|> in the scene. The robot has a wrist-mounted camera and can perform short sequences of actions, such as opening drawers, scanning compartments, and moving its camera viewpoint. 

Given the following constraints:
- The <|>TARGET_OBJECT_NAME<|> might be **partially or fully occluded**.
- It could be located **inside drawers**, **behind objects**, or **on shelves**.
---
Your job is to break this task down into smaller instructions that robot can complete
Every few seconds we will ask you to provide a natural language instruction for the robot.
We will provide two images: (1) an external view of the robot and (2) a view from a camera mounted on the robot's wrist.

Your instruction should refer to relevant objects that you see in the images, and should help the robot make progress towards completing the overall task (<|>OVERALL_TASK<|>).
To help you, we've prepared a list of instructions for you to choose from:
---
look around for the <|>TARGET_OBJECT_NAME<|>
open the top drawer
open the bottom drawer
look inside the top drawer
look inside the bottom drawer
look behind the toys
look behind the blue block
look behind the green block
look on the top shelf
look on the bottom shelf
move the red block to the side
move the blue block to the side
move the green block to the side
move griper to the right
move griper to the left
move griper to the center
move griper to the front
move griper to the back
move griper to the top
move griper to the bottom
find the <|>TARGET_OBJECT_NAME<|> and pick it up
pick up the <|>TARGET_OBJECT_NAME<|>
---
Here is the external view:
<|>CURRENT_IMAGE<|>
Here is the wrist view:
<|>CURRENT_WRIST_IMAGE<|>
Please provide an instruction for the robot to follow. Write the instruction in all lower case with no punctuation. Just provide the instruction; do not provide additional explanation.
\end{lstlisting}

\begin{figure}[h!]
\centering
    \includegraphics[width=0.6\linewidth]{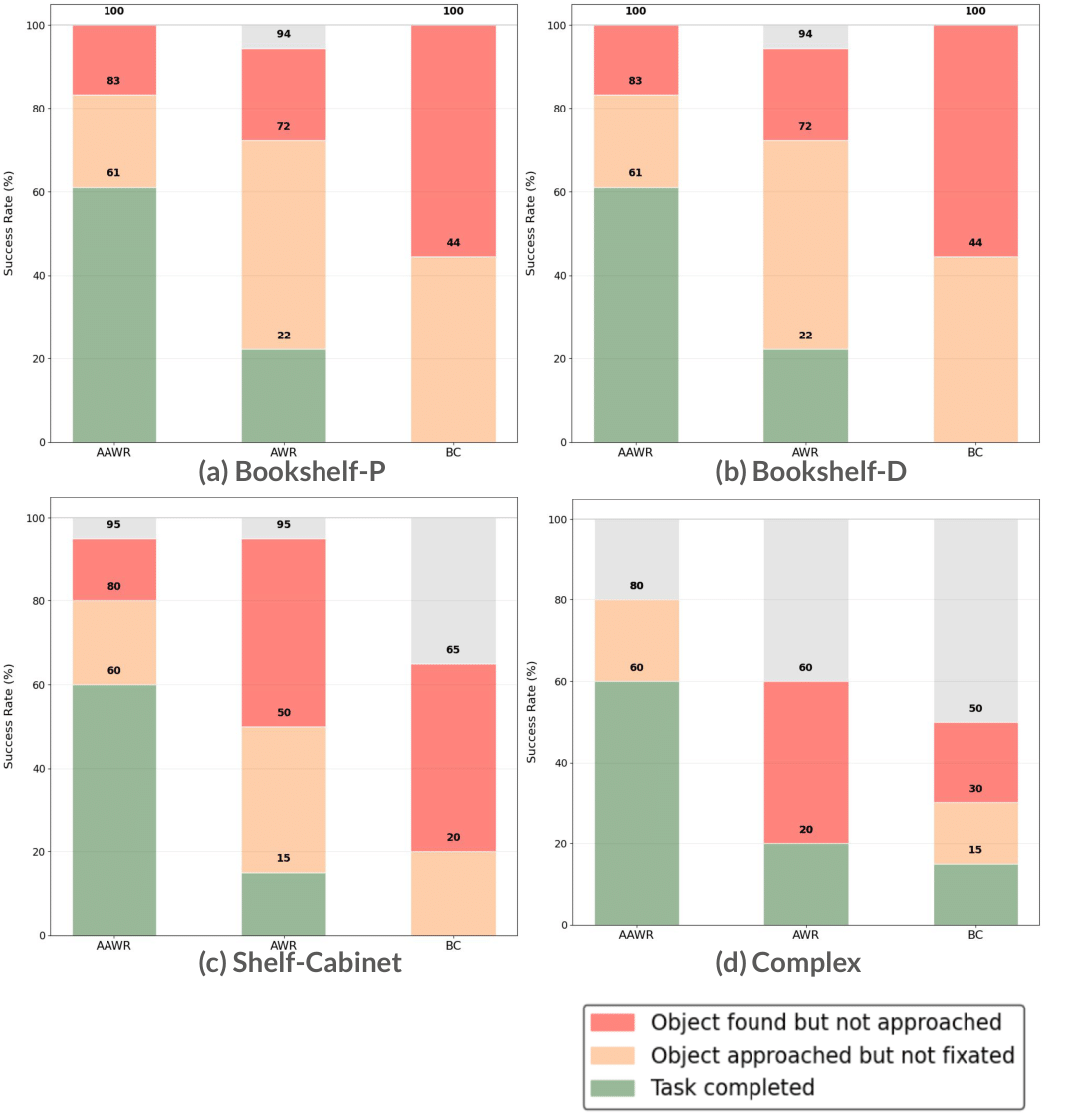}
\caption{Failure analysis of AAWR, AWR, and BC policies in all 4 tasks. For each policy, we show the number of times each policy completes the first, second and third stage of the Search \% rubric. AAWR completes all three stages the most, while AWR and BC fail to consistently approach and fixate on the target object.}
\label{fig:failure_analysis}
\end{figure}

\subsection{Results}

\begin{table}[t]
\caption{Metrics for active perception handoff tasks, AAWR consistently outperforms baselines. Bold = best, underline = second best.}
\centering
\resizebox{\textwidth}{!}{
\begin{tabular}{@{}lccc|ccc|ccc|ccc@{}}
\toprule
\multirow{2}{*}{Method} &
\multicolumn{3}{c|}{\thead{Bookshelf-P}} &
\multicolumn{3}{c|}{\thead{Bookshelf-D}} &
\multicolumn{3}{c|}{\thead{Shelf-Cabinet}} &
\multicolumn{3}{c}{\thead{Complex}} \\
\cmidrule(lr){2-4}
\cmidrule(lr){5-7}
\cmidrule(lr){8-10}
\cmidrule(lr){11-13}
& \thead{Search\,\% \\ $\uparrow$} & \thead{$\pi_0$\,\% \\ $\uparrow$} & \thead{Steps \\ $\downarrow$}
& \thead{Search\,\% \\ $\uparrow$} & \thead{$\pi_0$\,\% \\ $\uparrow$} & \thead{Steps \\ $\downarrow$}
& \thead{Search\,\% \\ $\uparrow$} & \thead{$\pi_0$\,\% \\ $\uparrow$} & \thead{Steps \\ $\downarrow$}
& \thead{Search\,\% \\ $\uparrow$} & \thead{$\pi_0$\,\% \\ $\uparrow$} & \thead{Steps \\ $\downarrow$} \\
\midrule
\textbf{AAWR} & \textbf{92.4}\scriptsize$\pm$5.0 & \textbf{44.4}\scriptsize$\pm$16.6 & \underline{36.6}\scriptsize$\pm$4.7
                    & \underline{81.3}\scriptsize$\pm$6.2 & \textbf{44.4}\scriptsize$\pm$11.7 & \underline{26.9}{\scriptsize$\pm$2.0}
                    & \textbf{78.2}\scriptsize$\pm$7.0 & \underline{40.0}\scriptsize$\pm$11.0 & \underline{46.3}\scriptsize$\pm$4.5 
                    & \underline{54.8}\scriptsize$\pm$8.5 & \underline{20.0}\scriptsize$\pm$8.9 & \textbf{121.0}\scriptsize$\pm$30.1 \\
AWR                 & \underline{79.6}\scriptsize$\pm$5.6 & 0.0\scriptsize$\pm$0.0 & \textbf{34.0}\scriptsize$\pm$2.7
                    & 62.6\scriptsize$\pm$6.5 & 16.7\scriptsize$\pm$8.8 & 30.2\scriptsize$\pm$10.1
                    & 52.3\scriptsize$\pm$6.1 & 10.0\scriptsize$\pm$6.7 & \textbf{38.0}\scriptsize$\pm$13.9
                    & 13.2\scriptsize$\pm$5.0 & 10.0\scriptsize$\pm$6.7 & 217.0\scriptsize$\pm$29.3 \\
BC                  & 29.9\scriptsize$\pm$13.5 & 20.0\scriptsize$\pm$12.6 & 84.0\scriptsize$\pm$9.2
                    & 47.7\scriptsize$\pm$4.0 & 16.7\scriptsize$\pm$8.8 & \textbf{22.5}\scriptsize$\pm$2.1
                    & 28.1\scriptsize$\pm$5.5 & 15.0\scriptsize$\pm$8.0 & 125.0\scriptsize$\pm$29.6
                    & 46.4\scriptsize$\pm$8.5 & 10.0\scriptsize$\pm$6.7 & \underline{138.0}\scriptsize$\pm$30.4 \\
$\pi_0$             & 11.0\scriptsize$\pm$11.0 & 16.7\scriptsize$\pm$15.2 & 263.3\scriptsize$\pm$36.7
                    & 66.7\scriptsize$\pm$21.1 & 33.3\scriptsize$\pm$19.2 & 229.7\scriptsize$\pm$44.8
                    & 10.0\scriptsize$\pm$10.0 & 10.0\scriptsize$\pm$9.5 & 280\scriptsize$\pm$20.0
                    & 29.6\scriptsize$\pm$15.3 & 20.0\scriptsize$\pm$12.6 & 252.5\scriptsize$\pm$31.7 \\
\midrule
Exhaustive          & 64.2\scriptsize$\pm$1.8 & \underline{44.0}\scriptsize$\pm$11.7 & 105.4\scriptsize$\pm$9.0
                    & \textbf{96.0}\scriptsize$\pm$2.7 & \underline{22.2}\scriptsize$\pm$9.8 & 106.7\scriptsize$\pm$8.6
                    & \underline{52.8}\scriptsize$\pm$5.0 & \textbf{45.0}\scriptsize$\pm$11.1 & 183.0\scriptsize$\pm$15.3
                    & \textbf{78.2}\scriptsize$\pm$7.8 & \textbf{30.0}\scriptsize$\pm$10.2 & 297.0\scriptsize$\pm$30.8 \\
VLM+$\pi_0$         & 31.4\scriptsize$\pm$10.2 & 27.8\scriptsize$\pm$10.6 & 322.3\scriptsize$\pm$31.9
                    & 33.2\scriptsize$\pm$17.1 & 16.7\scriptsize$\pm$16.7 & 281.8\scriptsize$\pm$18.1
                    & 28.2\scriptsize$\pm$7.3 & 15.0\scriptsize$\pm$8.0& 382\scriptsize$\pm$12.6& 14.8\scriptsize$\pm$10.2 & 10.0\scriptsize$\pm$9.5& 374.7\scriptsize$\pm$25.3\\
\bottomrule
\end{tabular}
}
\label{tab:franka_active_perception}
\end{table}

\begin{table}[t]
\caption{Active perception handoff tasks: Metrics normalized by time, and then compared to the exhaustive search. AAWR consistently is ${\sim}2{-}8 \times$ better than exhaustive search in such metrics, while other baselines have mixed results.}
\centering
\resizebox{\textwidth}{!}{
\begin{tabular}{@{}lcc|cc|cc|cc@{}}
\toprule
\multirow{2}{*}{Method} &
\multicolumn{2}{c|}{Bookshelf-P} &
\multicolumn{2}{c|}{Bookshelf-D} &
\multicolumn{2}{c|}{Shelf-Cabinet} &
\multicolumn{2}{c}{Complex} \\
\cmidrule(lr){2-3}
\cmidrule(lr){4-5}
\cmidrule(lr){6-7}
\cmidrule(lr){8-9}
& Search & Completion & Search & Completion & Search & Completion & Search & Completion \\
\midrule
\textbf{AAWR} & \textbf{4.14} & \textbf{2.91} & \textbf{3.36} & \textbf{7.93} & \textbf{5.86} & \textbf{3.51} & \textbf{1.72} & \textbf{1.64} \\
AWR          & 3.84 & 0.00 & 2.30 & 2.65 & 4.77 & 1.07 & 0.23 & 0.46 \\
BC           & 0.58 & 0.57 & 2.36 & 3.56 & 0.78 & 0.49 & 1.28 & 0.72 \\
$\pi_0$      & 0.07 & 0.15 & 0.32 & 0.70 & 0.12 & 0.15 & 0.45 & 0.78 \\
VLM+$\pi_0$  & 0.16 & 0.21 & 0.13 & 0.28 & 0.26 & 0.16 & 0.15 & 0.26 \\
\midrule
Exhaustive   & 1.00 & 1.00 & 1.00 & 1.00 & 1.00 & 1.00 & 1.00 & 1.00 \\
\bottomrule
\end{tabular}
}
\label{tab:relative_efficiency}
\end{table}

As seen in \cref{tab:franka_active_perception}, AAWR consistently outperforms baselines in all metrics, learning sensible active perception behavior to aid a generalist policy. We report the mean and standard error over 18 trials for all metrics. AAWR always outperforms non privileged AWR and BC, thus validating the usefulness of privileged information and the use of offline RL over supervised learning.
The Exhaustive baseline often has high search progress and $\pi_{0}$\% success rate, but is much slower than AAWR, showing that AAWR learns to search efficiently. We find that $\pi_{0}$ and VLM+$\pi_{0}$ both search poorly - they tend to take inefficient movements and fail to track the object. In \cref{fig:failure_analysis}, we break down the failures of the active perception policies in each task by recording the number of stages completed in the 3-point rubric. Across all tasks, AAWR completes the task (all three stages) the most. AWR and BC often spot the object, but do not consistently approach and fixate on the object.  

In the \textbf{Bookshelf} tasks, AAWR first learns to zoom out of the scene to see multiple shelves, then scans from bottom to up, and then approaches the target object once located. The AWR and BC baselines follow a relatively fixed search path that approaches the shelf, but the policies failed to efficiently scan the shelves. Even if they luckily glimpse the target object, they do not fixate on the object, reducing their search score and $\pi_{0}$ success rate. See \cref{fig:franka_kitchen_rollouts} for a visualization.

In the \textbf{Shelf-Cabinet} task, AAWR searches through the right bookshelf, before moving to the left cabinet. Both AWR and BC do not thoroughly search the scene, preventing them from finding objects placed in the left cabinet's drawer. 

In \textbf{Complex}, AAWR searches the bottom shelf, the right shelf, and then the left cabinet (see \cref{fig:concept}). See the \href{https://sites.google.com/view/rwrl-ap/home}{website} for comprehensive success and failure recordings of the policies over all tasks.

\subsection{Dataset ablation on the Complex task.}
We ablate the demonstration quality by collecting two demonstration datasets. First,  a suboptimal dataset of 195 demonstrations is collected by four different teleoperators. The dataset is quite diverse, suboptimal (success rate of 60\%), and potentially hard to learn behaviors from. We found that all policies trained on the initial dataset struggled to reproduce certain behaviors, in particular moving to the left side of the scene, despite the existence of such trajectories in the dataset.

Next, we collected a small 50 trajectory dataset from only one expert teleoperator, with a high success rate of 94\%. As seen in \cref{tab:complex_data_comparison}, all approaches improve using the smaller, more optimal optimal dataset. AAWR still outperforms baselines, as it consistently approaches and fixates on the objects, maximizing the success rate of $\pi_{0}$ after handoff. In contrast, AWR and BC do not approach and fixate target object as well as AAWR, often switching to $\pi_0$ when the target object is barely in view or in an odd location with respect to the gripper. 

\begin{table}[h!]
\caption{We ablate the demonstration dataset of the Complex task, and find that all approaches benefit from a smaller but more optimal demonstration source. AAWR still outperforms all baselines.}
\centering
\resizebox{0.7\textwidth}{!}{
\begin{tabular}{@{}lccc|ccc@{}}
\toprule
\multirow{2}{*}{Method} &
\multicolumn{3}{c|}{\textbf{Complex (Suboptimal Demos) }} &
\multicolumn{3}{c}{\textbf{Complex (Expert Demos)}} \\
\cmidrule(lr){2-4} \cmidrule(lr){5-7}
& \thead{Search\,\% \\ (\textuparrow)} & \thead{$\pi_0$\,\% \\ (\textuparrow)} & \thead{Steps \\ (\textdownarrow)}
& \thead{Search\,\% \\ (\textuparrow)} & \thead{$\pi_0$\,\% \\ (\textuparrow)} & \thead{Steps \\ (\textdownarrow)} \\
\midrule
AAWR          & \underline{54.8} & \underline{20.0} & \textbf{121.0} & \underline{73.2} & \textbf{50.0} & \textbf{43} \\
AWR           & 13.2 & 10.0 & 217.0 & 33.2 & \underline{40.0} & \underline{67} \\
BC            & 46.4 & 10.0 & \underline{138.0} & 31.5 & 15.0 & 77 \\
$\pi_0$       & 29.6 & 20.0 & 252.5 & 29.6 & 20.0 & 252.5 \\
\midrule
Exhaustive    & \textbf{78.2} & \textbf{30.0} & 297.0 & \textbf{78.2} & 30.0 & 297.0 \\
VLM+$\pi_0$   & 14.8 & 10.0 & 374.7 & 14.8 & 10.0 & 374.7 \\
\bottomrule
\end{tabular}
}
\label{tab:complex_data_comparison}
\end{table}

\subsection{Reward Quality Analysis}

We further analyze how reward signal quality and annotation noise affect search behaviors. In experiments, all episode begins without the target object in view, and the robot must actively search until it detects the object with visible and confident masks. Ideally, the object is in a clear view see \cref{fig:reward_example},  where we give a high terminal reward. Consequently, early reward spikes typically indicate mislabeled frames.

For each dataset, we label the mislabeled episode using a heuristic.  An episode is flagged as mislabeled if its reward exceeds a fixed threshold of $5.0$ within the first 30\% timesteps. We counted the mislabeling rate of success trials, failure trials and total trials of each search datasets  below:

\begin{table}[h!]
\caption{Per-task policy performance and reward noise.}
\centering
\resizebox{\textwidth}{!}{
\begin{tabular}{@{}lccccc@{}}
\toprule
\textbf{Task} & \thead{Search\,\% \\ (\textuparrow)} & \thead{Completion\,\% \\ (\textuparrow)} & \thead{Mislabel\,\% (Succ./Fail/Total)} & \textbf{Observation} \\
\midrule
Bookshelf-P  & $92.4 \pm 5.0$ & $44.4 \pm 16.6$ & 4.6 / 0 / 2.7 & Clean reward\\
Bookshelf-D & $81.3 \pm 6.2$ & $44.4 \pm 11.7$ & 75 / 56.8 / 68.8 & Noisy reward \\
Shelf-Cabinet & $78.2 \pm 7.0$ & $40.0 \pm 11.0$ & 17.1 / 30.8 / 23.8 & Harder scene with longer horizon \\
Complex: Suboptimal & $54.8 \pm 8.5$ &  $20.0 \pm 8.9$ & 13.4 / 11.8 / 12.8 & Diverse dirty data. \\
Complex: Expert & $73.2  \pm 12.0$ & $50.0 \pm 15.8$ & 6.4 / 0 / 6.0 & Expert clean data. \\
\bottomrule
\end{tabular}
}
\label{tab:reward_mislabel_analysis}
\end{table}
As the table shows, higher quality data (e.g. clean, sparse reward with less detection error) correspond to better search and completion rates. Despite noisy rewards in Bookshelf-D, AAWR is still able to learn good search behavior.

\section{Blind Pick Details}
\label{app:real_koch_results}

\paragraph{Task Definition}
In this real world experiment, the Koch robot must pick up a small rectangular candy (1 cm × 1 cm × 2 cm). 
It operates blindly since it is given only the initial object position and its own joint positions during the episode. Interactive perception is required to solve the task, since the robot must sense when the object is gripped using its proprioception and then proceed to lift it up.
\begin{itemize}
\item \textbf{Observation}: Initial object position, and current robot joint positions.
    \begin{itemize}
        \item \textbf{Partial observation}: joint positions and initial Cartesian position of the target.
        \item \textbf{Privileged observation}: real time Cartesian position of the target at each timestep.
    \end{itemize}
\item \textbf{Action Space}: Cartesian position commands relative to robot base and gripper joint control.
\end{itemize}

\begin{figure}[h]
  \centering
  \begin{minipage}[t]{0.48\textwidth}
    \centering
    \vspace{0pt}%
    \includegraphics[width=\linewidth]{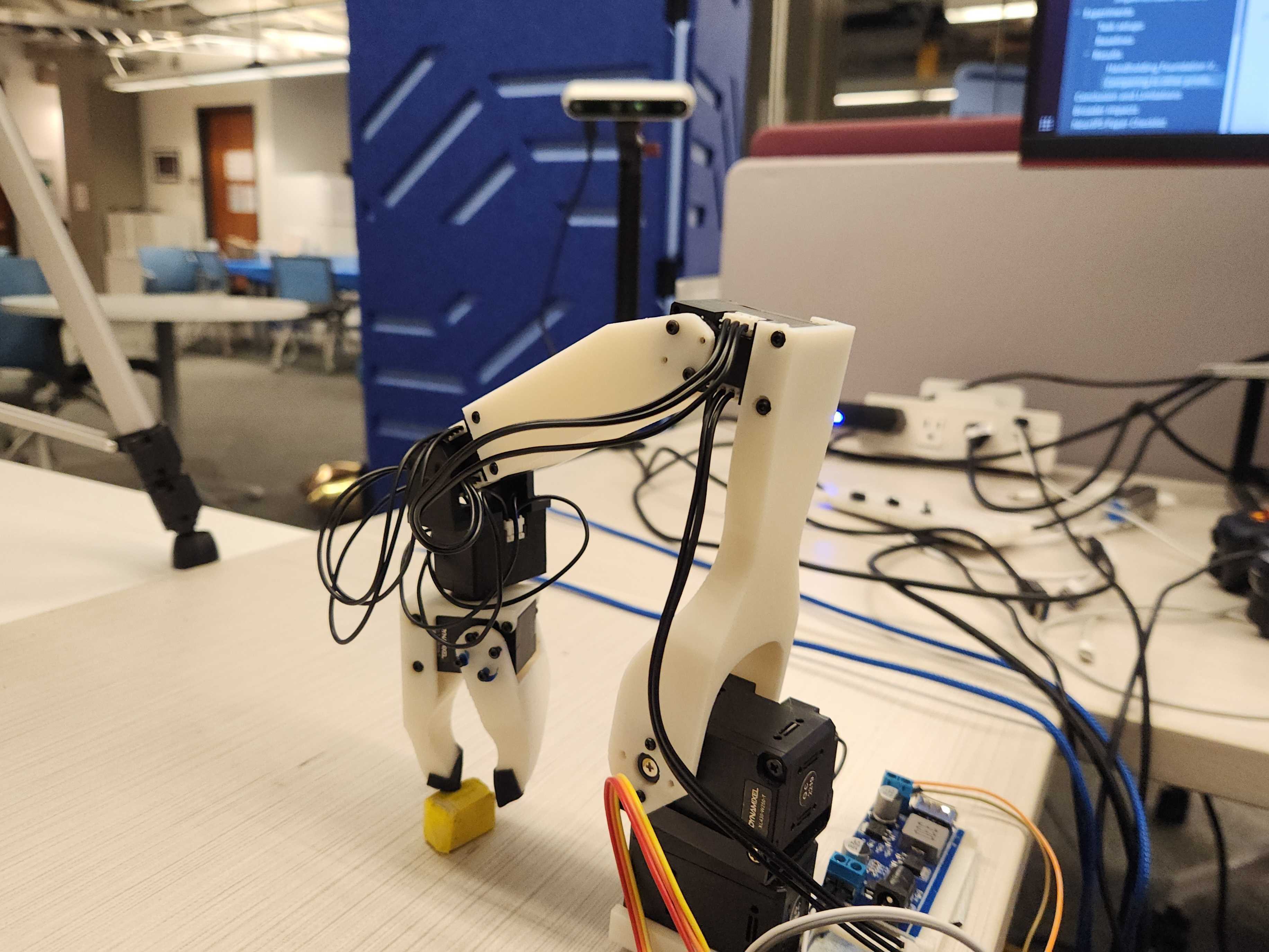}
    \captionof{figure}{Hardware configuration: Koch robot with RGB-D camera in the back.}
    \label{fig:koch_robot_setup}
  \end{minipage}
  \hfill
  \begin{minipage}[t]{0.48\textwidth}
    \centering
    \vspace{0pt}
    \includegraphics[width=\linewidth]{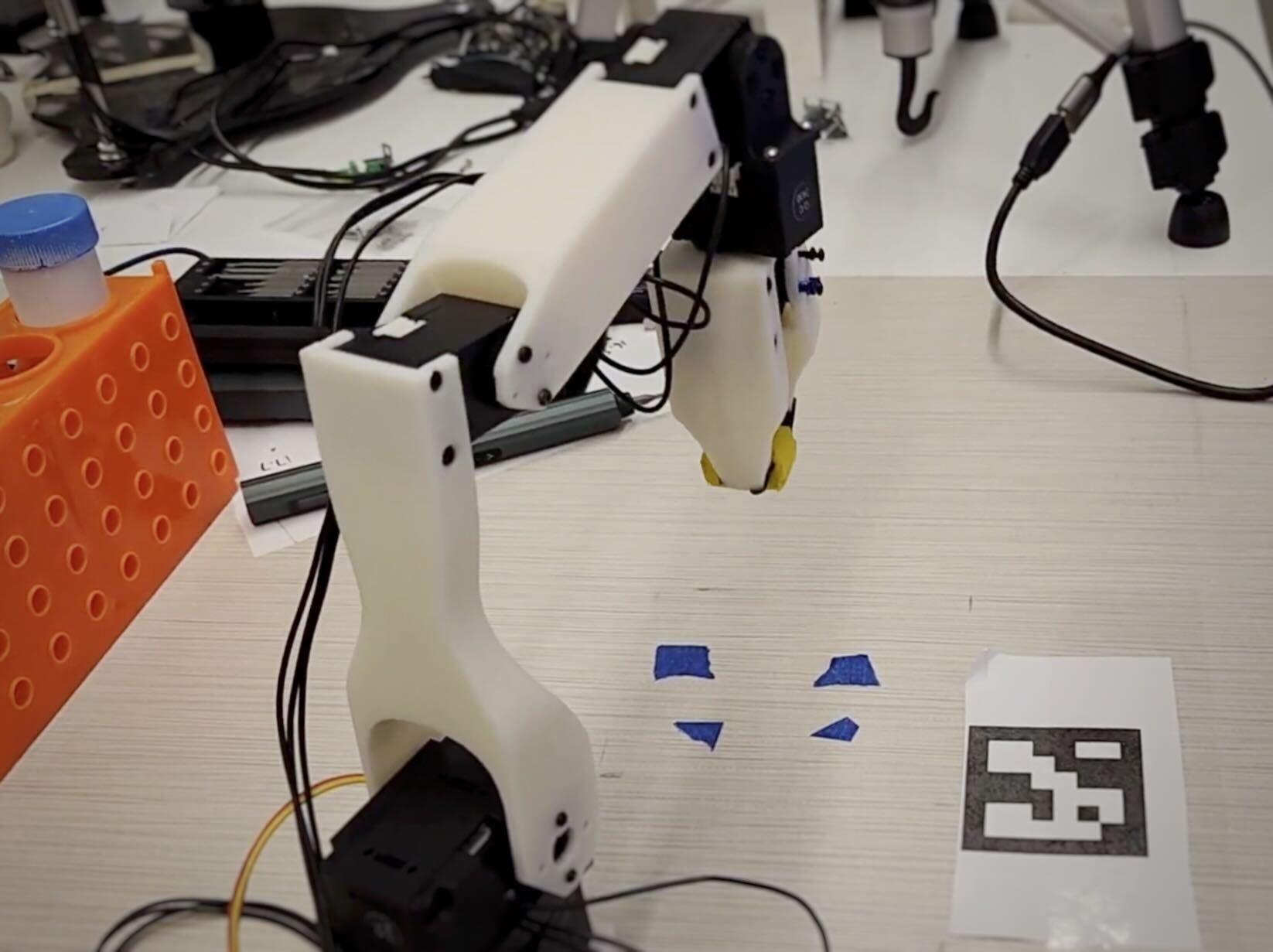}
    \captionof{figure}{Koch robot picking up the target object.}
    \label{fig:koch_task_scene}
  \end{minipage}
\end{figure}

\paragraph{Hardware and Scene Setup}
We utilized a Kochv1.1 robot\citep{moss2025kochv11}, an open-source, low-cost, 5 DoF robotic arm. The robot arm is operated via a Cartesian position controller respect to the robot base frame. The forward and inverse kinematic computations is computed in a MuJoCo simulation model synchronized with real robot's joint positions in real-time.  

To get the privileged cartesian position of the object, we set up a RealSense D455i RGBD camera pointed towards the robot workspace. We calibrate the D455i using an ArUco marker, and then use color segmentation to filter the point cloud to estimate the 3d position of the object on the table. The target object is randomly placed within a 10cm square region in front of the robot at the start of each trial.

\paragraph{Data Collection}
Data for the Koch robot experiment was collected by executing approximately 100 demonstration episodes, in total containing around 3000 transitions. The demonstrations were gathered from a noisy hand-coded script, resulting in a success rate of approximately 20\%. 

\paragraph{Reward Design}

\begin{enumerate}%
    \item \textbf{Distance penalty}
        The distance penalty is the reward term where we introduce privileged information: the real-time position of the target object. Th reward term is computed by:
          \[
            r_t = -\left\lVert x_t - x^* \right\rVert,
          \] where $x_t$ is the current Euclidean position of the target object computed via color segmentation and the depth camera, $x$ is the current Euclidean position of the robot's end effector.

    \item \textbf{Grasp reward}
        Using robot proprioception, we can determine if the gripper has a firm grasp on an object. More specifically, if the gripper receives a closing command, but the actuator cannot rotate the gripper to the commanded position, a firm grasp is detected by proprioception. Therefore, we have:
        \[
            r_{grasp} = k_{grasp} \mathbbm{1}_{\{\text{grasped}_t = \text{True}\}}
        \]
          
    \item \textbf{Success reward} 
        A larger reward when the robot fully accomplishes the task: picking up the target object and lifts it 7 cm above the robot's base. In particular, the reward is given by:
          \[
            r_{success} = k_{grasp} \mathbbm{1}_{\{z_{ee} - z_{base} > 0.07 \land \text{grasped}_t = True\}},
          \]
          where $z_{ee}$ is the z-axis position of the end effector, $z_{base}$ is the z-axis position of the robot base.

\end{enumerate}

\paragraph{Baselines}
\begin{enumerate}
\item \textbf{BC}: Behavior Cloning using offline successful demonstrations only.
\item \textbf{AWR}: Advantage Weighted Regression trained both offline and online.
\item \textbf{AAWR}: Asymmetric Advantage Weighted Regression leveraging privileged information during offline and online training phases.
\end{enumerate}

\paragraph{Metrics}
Performance is evaluated across 40 trials per method with the following metrics:
\begin{enumerate}%
\item \textbf{Grasp \%}: Percentage of trials in which the robot successfully grasped the object.
\item \textbf{Pick \%}: Percentage of trials where the robot successfully grasped and lifted the object.
\end{enumerate}

\paragraph{Online Training and Evaluation}
All methods underwent an initial 20,000-step offline pretraining phase followed by online fine-tuning using 1,200 transitions (~40-50 episodes), each lasting approximately 20 minutes.

During evaluation, for each trial, a policy has 30 timesteps to accomplish the task. Evaluation results confirmed AAWR's superior performance in grasping and picking success rates and demonstrated notably effective retrying behaviors following failed initial attempts.

\section{Simulated Experimental Details}
\label{app:sim_results_details}

\subsection{Camouflage Pick}
The Camouflage Pick experiment requires a simulated Koch robot to pick up a tiny, hard-to-see marble initialized in a 10$\times$10 cm region (see \cref{fig:camopick_result_behavior}). 

\begin{itemize}[leftmargin=*]
    \item \textbf{Observation Space}:
    \begin{itemize}
        \item \textbf{Partial observation}: 3rd person 84$\times$84 image of the robot and marble.
        \item \textbf{Privileged observation}: robot and marble positions using simulator state
    \end{itemize}
    \item \textbf{Action Space}: Cartesian velocity of end effector frame and gripper position
    \item \textbf{Reward Function}: Sparse reward that gives $+1$ if the marble is in gripper and altitude is over 7cm.
    \item \textbf{Demonstrations}:  We collect 100 demos using a hand-coded script. The script is not perfect and gets around 30\% success rate. 
    \item Offline / Online budget: 20K offline, 80K online

\end{itemize}

\begin{figure}[h]
  \centering
  \begin{tikzpicture}
    \node[inner sep=0pt] (img)
      {\includegraphics[width=\linewidth]{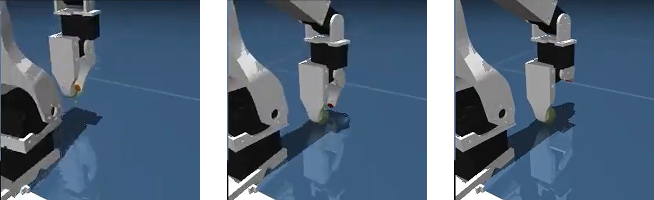}};
    \node[anchor=south, yshift=-4pt] at ($(img.north west)!0.167!(img.north east)$) {\small \textbf{(a)} AAWR};
    \node[anchor=south, yshift=-4pt] at ($(img.north west)!0.500!(img.north east)$) {\small \textbf{(b)} AWR};
    \node[anchor=south, yshift=-4pt] at ($(img.north west)!0.833!(img.north east)$) {\small \textbf{(c)} BC};
  \end{tikzpicture}
  \caption{\textbf{Camouflage Pick (third-person camera only).}
  (a) \textbf{AAWR}: succeeds with search then grasp.
  (b) \textbf{AWR}: fails due to mis-grasps.
  (c) \textbf{BC}: fails due to overfit on grasping without search}
  \label{fig:camopick_result_behavior}
\end{figure}

We compare against symmetric AWR, the non-privileged version of AAWR, and BC. We train BC for 20,000 offline steps, periodically checkpointing and evaluating it, and report the highest performing checkpoint. We pretrain AWR and AAWR for 20,000 offline steps. Then, we do online finetuning for 80,000 environment steps. While training, we periodically evaluate the policies by recording their average success over 100 trials. The success metric is the sparse reward function.

We train all models with a batch size of 256, learning rate of 0.0001, and the Adam optimizer. For online finetuning following \citep{feng2023finetuning}, we use an update-to-date ratio of 1 , performing gradient updates after every episode. For AWR and AAWR, we use an advantage temperature of 10.

 We instantiate separate networks for the for the policy and value/critic networks. We use the same encoder / head recipe for all models, following \citep{feng2023finetuning}.  We use a CNN to process the RGB image into a 50-dimensional latent, and a MLP to process the privileged information into a 50-dimensional latent. The latents are then fed into a MLP to get the output.

As seen in \cref{fig:simulated_results}, AAWR outperforms baselines. AWR and BC frequently completely miss the marble, while AAWR displays more accurate picking behavior. Even after picking, the small marble frequently slips out of the grasp, making the success rate for all of the policies rather low. See the website for videos.

\subsection{Fully Observed Pick}
The Fully Observed Pick experiment requires a simulated xArm6 robot to pick up a block. To make the scene as fully observable as possible, we make the xArm6 robot invisible except for its grippers, making occlusion of the object by the robot impossible (see \cref{fig:visually_ambiguous_pick}). The object is randomly initialized in a $25\times25$cm region in front of the arm.

\begin{itemize}[leftmargin=*]
    \item \textbf{Observation Space}:
    \begin{itemize}
        \item \textbf{Partial observation}: 3rd person 84$\times$84 image of the robot and block.
        \item \textbf{Privileged observation}: robot and object positions using simulator state
    \end{itemize}
    \item \textbf{Action Space}: Cartesian velocity of end effector frame and gripper position
    \item \textbf{Reward Function}: Sparse reward that gives $+1$ if the block is in gripper and altitude is over 10cm.
    \item \textbf{Demonstrations}:  100 demos using a hand-coded script. The script is not perfect and gets around 30\% success rate. 
    \item Offline / Online budget: 20K offline, 20K online
\end{itemize}

 We use the same baselines, hyperparameters, network, and evaluation configuration as the Camouflage Pick experiment. The only change is the offline / online budget - 20K steps offline, and 20K steps online.

\begin{figure}[h]
  \centering
  \begin{tikzpicture}
    \node[inner sep=0pt] (img)
      {\includegraphics[width=\linewidth]{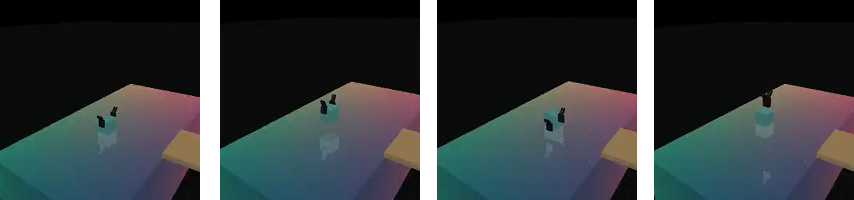}};
    \node[anchor=south, yshift=2pt] at ($(img.north west)!0.125!(img.north east)$) {\small (a)};
    \node[anchor=south, yshift=2pt] at ($(img.north west)!0.375!(img.north east)$) {\small (b)};
    \node[anchor=south, yshift=2pt] at ($(img.north west)!0.625!(img.north east)$) {\small (c)};
    \node[anchor=south, yshift=2pt] at ($(img.north west)!0.875!(img.north east)$) {\small (d)};
  \end{tikzpicture}
  \caption{Visually ambiguous pick (fully observed). (a–b) \textbf{AAWR}: grasp centers then lifts (success).
           (c–d) \textbf{AWR}: hovers off-center, closes mid-air (fail).}
  \label{fig:visually_ambiguous_pick}
\end{figure}

Results are in \cref{fig:simulated_results}. AAWR effectively solves the task with near perfect success rate, learning to accurately localize and grasp the object. AWR shows two failure modes. First, it struggles with positioning the gripper over the block. It often places the gripper in front of the block, which looks like a reasonable grasp from the camera angle, but is in reality quite off from the block (see \cref{fig:visually_ambiguous_pick}). Next when it is able to grasp the block, it does not lift up the block. BC displays similar failure modes. See the website for videos of the policies.

\subsection{Active Perception Koch}
In this task, we equip the simulated Koch robot with a wrist camera with a small field of view, and task it to pick up a cube randomly initialized in a 10$\times$20 cm region in front of it. (see \cref{fig:koch_observation}).
As the wrist camera has a narrow field of view and becomes self-occluded during grasp closure, the robot must first scan to rediscover the cube before executing the pick.

\begin{itemize}[leftmargin=*]
    \item \textbf{Observation Space}:
    \begin{itemize}
        \item \textbf{Partial observation}: Frame-stacked (past 3) grayscale wrist camera images of size $84\times84$.
        \item \textbf{Privileged observation}: object positions using simulator state
    \end{itemize}
    \item \textbf{Action Space}: Cartesian velocity of end effector frame and gripper position
    \item \textbf{Reward Function}: Sparse reward that gives $+1$ if the object is in gripper and altitude is over 7cm.
    \item \textbf{Demonstrations}:  We collect 100 demos using a hand-coded script. The scripted behavior uses state information to command the robot to go over the block and pick it up. The script is not perfect and gets around 30\% success rate. 
    \item Offline / Online budget: 100K offline, 900K online

\end{itemize}

\begin{figure}[h]
  \centering
  \begin{tikzpicture}
    \node[inner sep=0pt] (img) {\includegraphics[width=\linewidth]{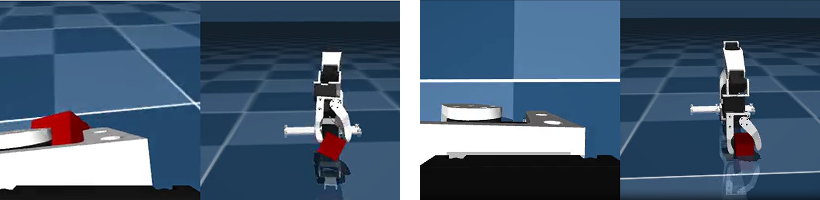}};
  \end{tikzpicture}
  \caption{\textbf{Wrist camera with limited field of view.}
  When the gripper approaches and closes on the cube, the wrist camera becomes self-occluded,
  The side view is shown for visualization/evaluation only and is not provided to the policy.}
  \label{fig:koch_observation}
\end{figure}

We compare against other approaches that use privileged information. The first is \textbf{Distillation} \citep{czarnecki2019distilling,chen2023sequential}, which features a two-stage training process. In the first stage, teacher acquisition, a privileged teacher policy is trained on the collected successful demonstrations. In the second phase, distillation, the teacher is distilled into a student policy. Following \citep{czarnecki2019distilling}, the distillation phase is performed over online rollouts from the student policy. In our setup, after the first stage, the teacher policy is able to get near perfect success rate on the task, although note that it is using privileged information to do so.

The second baseline is a variational information bottleneck approach \textbf{VIB} \citep{hsu2022visionbased} that trades off the RL return with a KL penalty for accessing privileged information. Concretely, this penalty is implemented by defining a privileged latent that comes from the posterior $z \sim p(\cdot \mid o^+)$, and constraining the posterior to the unprivileged gaussian prior $\mathcal{N}(0, I)$ via KL divergence. During training, the policy $\pi(a \mid o, z)$ uses the latent from the privileged posterior, and during evaluation uses a latent sampled from the unprivileged prior. 
The RL agent should learn to minimally use privileged information, since usage will negatively impact its overall return. We conduct sweeps over different weights of the KL term $\beta=0.01, 0.1, 0.5, 1, 10$, report the performance of the best performing weight ($\beta=0.5$). 

All baselines are implemented in the same codebase, using the same encoder / head architecture configuration as AAWR. We conduct sanity checks to make sure the baselines work, such as making sure the privileged teacher and VIB policy with the privileged latent get high success rates. 

\begin{figure}[h]
  \centering
  \begin{tikzpicture}
    \node[inner sep=1pt] (img) {\includegraphics[width=0.8\linewidth]{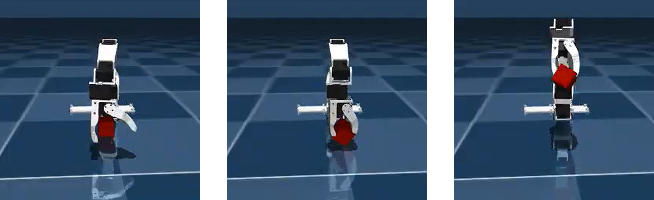}};
\node[anchor=south, yshift=2pt] at ($(img.north west)!0.166!(img.north east)$) {\small (a) AAWR};
\node[anchor=south, yshift=2pt] at ($(img.north west)!0.500!(img.north east)$) {\small (b) Distillation};
\node[anchor=south, yshift=2pt] at ($(img.north west)!0.834!(img.north east)$) {\small (c) VIB};

  \end{tikzpicture}
  \caption{\textbf{Resulting behaviors on the Koch task.}
  (a) \textbf{AAWR} actively scans the workspace, recenters the cube in view, grasps, and lifts with near 100\% success at evaluation.
  (b) \textbf{Distillation} learns a suboptimal “go-to-center” strategy and often closes off-target due to the absence of scanning in the teacher.
  (c) \textbf{VIB} degrades at evaluation without privileged information; using only a prior latent leads to drift and low success.}
  \label{fig:koch_result_behavior}
\end{figure}

As seen in \cref{fig:koch_result_behavior}, only AAWR learns to do active perception by scanning the workspace, getting near 100\% success rate during evaluation time. The distilled student learns a suboptimal behavior of just approaching the center of the workspace, because its privileged teacher never displays the scanning behavior. VIB does poorly during evaluation with no access to privileged information, even though it was trained to minimally use privileged information. 

Additional video examples are available on our project page at \url{https://penn-pal-lab.github.io/aawr/}.

\section*{NeurIPS Paper Checklist}

\begin{enumerate}

\item {\bf Claims}
    \item[] Question: Do the main claims made in the abstract and introduction accurately reflect the paper's contributions and scope?
    \item[] Answer: \answerYes{} %
    \item[] Justification: We provide experimental evidence and theoretical analysis for our claims.
    \item[] Guidelines:
    \begin{itemize}
        \item The answer NA means that the abstract and introduction do not include the claims made in the paper.
        \item The abstract and/or introduction should clearly state the claims made, including the contributions made in the paper and important assumptions and limitations. A No or NA answer to this question will not be perceived well by the reviewers. 
        \item The claims made should match theoretical and experimental results, and reflect how much the results can be expected to generalize to other settings. 
        \item It is fine to include aspirational goals as motivation as long as it is clear that these goals are not attained by the paper. 
    \end{itemize}

\item {\bf Limitations}
    \item[] Question: Does the paper discuss the limitations of the work performed by the authors?
    \item[] Answer: \answerYes{} %
    \item[] Justification: We outline limitations in the conclusions section.
    \item[] Guidelines:
    \begin{itemize}
        \item The answer NA means that the paper has no limitation while the answer No means that the paper has limitations, but those are not discussed in the paper. 
        \item The authors are encouraged to create a separate "Limitations" section in their paper.
        \item The paper should point out any strong assumptions and how robust the results are to violations of these assumptions (e.g., independence assumptions, noiseless settings, model well-specification, asymptotic approximations only holding locally). The authors should reflect on how these assumptions might be violated in practice and what the implications would be.
        \item The authors should reflect on the scope of the claims made, e.g., if the approach was only tested on a few datasets or with a few runs. In general, empirical results often depend on implicit assumptions, which should be articulated.
        \item The authors should reflect on the factors that influence the performance of the approach. For example, a facial recognition algorithm may perform poorly when image resolution is low or images are taken in low lighting. Or a speech-to-text system might not be used reliably to provide closed captions for online lectures because it fails to handle technical jargon.
        \item The authors should discuss the computational efficiency of the proposed algorithms and how they scale with dataset size.
        \item If applicable, the authors should discuss possible limitations of their approach to address problems of privacy and fairness.
        \item While the authors might fear that complete honesty about limitations might be used by reviewers as grounds for rejection, a worse outcome might be that reviewers discover limitations that aren't acknowledged in the paper. The authors should use their best judgment and recognize that individual actions in favor of transparency play an important role in developing norms that preserve the integrity of the community. Reviewers will be specifically instructed to not penalize honesty concerning limitations.
    \end{itemize}

\item {\bf Theory assumptions and proofs}
    \item[] Question: For each theoretical result, does the paper provide the full set of assumptions and a complete (and correct) proof?
    \item[] Answer: \answerYes{} %
    \item[] Justification: We provide proofs and assumptions for all theoretical claims in the appendix.
    \item[] Guidelines:
    \begin{itemize}
        \item The answer NA means that the paper does not include theoretical results. 
        \item All the theorems, formulas, and proofs in the paper should be numbered and cross-referenced.
        \item All assumptions should be clearly stated or referenced in the statement of any theorems.
        \item The proofs can either appear in the main paper or the supplemental material, but if they appear in the supplemental material, the authors are encouraged to provide a short proof sketch to provide intuition. 
        \item Inversely, any informal proof provided in the core of the paper should be complemented by formal proofs provided in appendix or supplemental material.
        \item Theorems and Lemmas that the proof relies upon should be properly referenced. 
    \end{itemize}

    \item {\bf Experimental result reproducibility}
    \item[] Question: Does the paper fully disclose all the information needed to reproduce the main experimental results of the paper to the extent that it affects the main claims and/or conclusions of the paper (regardless of whether the code and data are provided or not)?
    \item[] Answer: \answerYes{} %
    \item[] Justification: We provide experimental details in the main text and appendix.
    \item[] Guidelines:
    \begin{itemize}
        \item The answer NA means that the paper does not include experiments.
        \item If the paper includes experiments, a No answer to this question will not be perceived well by the reviewers: Making the paper reproducible is important, regardless of whether the code and data are provided or not.
        \item If the contribution is a dataset and/or model, the authors should describe the steps taken to make their results reproducible or verifiable. 
        \item Depending on the contribution, reproducibility can be accomplished in various ways. For example, if the contribution is a novel architecture, describing the architecture fully might suffice, or if the contribution is a specific model and empirical evaluation, it may be necessary to either make it possible for others to replicate the model with the same dataset, or provide access to the model. In general. releasing code and data is often one good way to accomplish this, but reproducibility can also be provided via detailed instructions for how to replicate the results, access to a hosted model (e.g., in the case of a large language model), releasing of a model checkpoint, or other means that are appropriate to the research performed.
        \item While NeurIPS does not require releasing code, the conference does require all submissions to provide some reasonable avenue for reproducibility, which may depend on the nature of the contribution. For example
        \begin{enumerate}
            \item If the contribution is primarily a new algorithm, the paper should make it clear how to reproduce that algorithm.
            \item If the contribution is primarily a new model architecture, the paper should describe the architecture clearly and fully.
            \item If the contribution is a new model (e.g., a large language model), then there should either be a way to access this model for reproducing the results or a way to reproduce the model (e.g., with an open-source dataset or instructions for how to construct the dataset).
            \item We recognize that reproducibility may be tricky in some cases, in which case authors are welcome to describe the particular way they provide for reproducibility. In the case of closed-source models, it may be that access to the model is limited in some way (e.g., to registered users), but it should be possible for other researchers to have some path to reproducing or verifying the results.
        \end{enumerate}
    \end{itemize}

\item {\bf Open access to data and code}
    \item[] Question: Does the paper provide open access to the data and code, with sufficient instructions to faithfully reproduce the main experimental results, as described in supplemental material?
    \item[] Answer: \answerYes{} %
    \item[] Justification: We will release code for the algorithm and environments.
    \item[] Guidelines:
    \begin{itemize}
        \item The answer NA means that paper does not include experiments requiring code.
        \item Please see the NeurIPS code and data submission guidelines (\url{https://nips.cc/public/guides/CodeSubmissionPolicy}) for more details.
        \item While we encourage the release of code and data, we understand that this might not be possible, so “No” is an acceptable answer. Papers cannot be rejected simply for not including code, unless this is central to the contribution (e.g., for a new open-source benchmark).
        \item The instructions should contain the exact command and environment needed to run to reproduce the results. See the NeurIPS code and data submission guidelines (\url{https://nips.cc/public/guides/CodeSubmissionPolicy}) for more details.
        \item The authors should provide instructions on data access and preparation, including how to access the raw data, preprocessed data, intermediate data, and generated data, etc.
        \item The authors should provide scripts to reproduce all experimental results for the new proposed method and baselines. If only a subset of experiments are reproducible, they should state which ones are omitted from the script and why.
        \item At submission time, to preserve anonymity, the authors should release anonymized versions (if applicable).
        \item Providing as much information as possible in supplemental material (appended to the paper) is recommended, but including URLs to data and code is permitted.
    \end{itemize}

\item {\bf Experimental setting/details}
    \item[] Question: Does the paper specify all the training and test details (e.g., data splits, hyperparameters, how they were chosen, type of optimizer, etc.) necessary to understand the results?
    \item[] Answer: \answerYes{} %
    \item[] Justification: See appendix for all details.
    \item[] Guidelines:
    \begin{itemize}
        \item The answer NA means that the paper does not include experiments.
        \item The experimental setting should be presented in the core of the paper to a level of detail that is necessary to appreciate the results and make sense of them.
        \item The full details can be provided either with the code, in appendix, or as supplemental material.
    \end{itemize}

\item {\bf Experiment statistical significance}
    \item[] Question: Does the paper report error bars suitably and correctly defined or other appropriate information about the statistical significance of the experiments?
    \item[] Answer: \answerYes{} %
    \item[] Justification: See appendix for all details.
    \item[] Guidelines:
    \begin{itemize}
        \item The answer NA means that the paper does not include experiments.
        \item The authors should answer "Yes" if the results are accompanied by error bars, confidence intervals, or statistical significance tests, at least for the experiments that support the main claims of the paper.
        \item The factors of variability that the error bars are capturing should be clearly stated (for example, train/test split, initialization, random drawing of some parameter, or overall run with given experimental conditions).
        \item The method for calculating the error bars should be explained (closed form formula, call to a library function, bootstrap, etc.)
        \item The assumptions made should be given (e.g., Normally distributed errors).
        \item It should be clear whether the error bar is the standard deviation or the standard error of the mean.
        \item It is OK to report 1-sigma error bars, but one should state it. The authors should preferably report a 2-sigma error bar than state that they have a 96\% CI, if the hypothesis of Normality of errors is not verified.
        \item For asymmetric distributions, the authors should be careful not to show in tables or figures symmetric error bars that would yield results that are out of range (e.g. negative error rates).
        \item If error bars are reported in tables or plots, The authors should explain in the text how they were calculated and reference the corresponding figures or tables in the text.
    \end{itemize}

\item {\bf Experiments compute resources}
    \item[] Question: For each experiment, does the paper provide sufficient information on the computer resources (type of compute workers, memory, time of execution) needed to reproduce the experiments?
    \item[] Answer: \answerYes{} %
    \item[] Justification: We provide details in the appendix.
    \item[] Guidelines:
    \begin{itemize}
        \item The answer NA means that the paper does not include experiments.
        \item The paper should indicate the type of compute workers CPU or GPU, internal cluster, or cloud provider, including relevant memory and storage.
        \item The paper should provide the amount of compute required for each of the individual experimental runs as well as estimate the total compute. 
        \item The paper should disclose whether the full research project required more compute than the experiments reported in the paper (e.g., preliminary or failed experiments that didn't make it into the paper). 
    \end{itemize}
    
\item {\bf Code of ethics}
    \item[] Question: Does the research conducted in the paper conform, in every respect, with the NeurIPS Code of Ethics \url{https://neurips.cc/public/EthicsGuidelines}?
    \item[] Answer: \answerYes{} %
    \item[] Justification: We have reviewed it.
    \item[] Guidelines:
    \begin{itemize}
        \item The answer NA means that the authors have not reviewed the NeurIPS Code of Ethics.
        \item If the authors answer No, they should explain the special circumstances that require a deviation from the Code of Ethics.
        \item The authors should make sure to preserve anonymity (e.g., if there is a special consideration due to laws or regulations in their jurisdiction).
    \end{itemize}

\item {\bf Broader impacts}
    \item[] Question: Does the paper discuss both potential positive societal impacts and negative societal impacts of the work performed?
    \item[] Answer: \answerYes{} %
    \item[] Justification: We discuss such things in a section.
    \item[] Guidelines:
    \begin{itemize}
        \item The answer NA means that there is no societal impact of the work performed.
        \item If the authors answer NA or No, they should explain why their work has no societal impact or why the paper does not address societal impact.
        \item Examples of negative societal impacts include potential malicious or unintended uses (e.g., disinformation, generating fake profiles, surveillance), fairness considerations (e.g., deployment of technologies that could make decisions that unfairly impact specific groups), privacy considerations, and security considerations.
        \item The conference expects that many papers will be foundational research and not tied to particular applications, let alone deployments. However, if there is a direct path to any negative applications, the authors should point it out. For example, it is legitimate to point out that an improvement in the quality of generative models could be used to generate deepfakes for disinformation. On the other hand, it is not needed to point out that a generic algorithm for optimizing neural networks could enable people to train models that generate Deepfakes faster.
        \item The authors should consider possible harms that could arise when the technology is being used as intended and functioning correctly, harms that could arise when the technology is being used as intended but gives incorrect results, and harms following from (intentional or unintentional) misuse of the technology.
        \item If there are negative societal impacts, the authors could also discuss possible mitigation strategies (e.g., gated release of models, providing defenses in addition to attacks, mechanisms for monitoring misuse, mechanisms to monitor how a system learns from feedback over time, improving the efficiency and accessibility of ML).
    \end{itemize}
    
\item {\bf Safeguards}
    \item[] Question: Does the paper describe safeguards that have been put in place for responsible release of data or models that have a high risk for misuse (e.g., pretrained language models, image generators, or scraped datasets)?
    \item[] Answer: \answerNA{} %
    \item[] Justification: We don't foresee a high risk for misuse .
    \item[] Guidelines:
    \begin{itemize}
        \item The answer NA means that the paper poses no such risks.
        \item Released models that have a high risk for misuse or dual-use should be released with necessary safeguards to allow for controlled use of the model, for example by requiring that users adhere to usage guidelines or restrictions to access the model or implementing safety filters. 
        \item Datasets that have been scraped from the Internet could pose safety risks. The authors should describe how they avoided releasing unsafe images.
        \item We recognize that providing effective safeguards is challenging, and many papers do not require this, but we encourage authors to take this into account and make a best faith effort.
    \end{itemize}

\item {\bf Licenses for existing assets}
    \item[] Question: Are the creators or original owners of assets (e.g., code, data, models), used in the paper, properly credited and are the license and terms of use explicitly mentioned and properly respected?
    \item[] Answer: \answerYes{} %
    \item[] Justification: We properly credit the assets we used.
    \item[] Guidelines:
    \begin{itemize}
        \item The answer NA means that the paper does not use existing assets.
        \item The authors should cite the original paper that produced the code package or dataset.
        \item The authors should state which version of the asset is used and, if possible, include a URL.
        \item The name of the license (e.g., CC-BY 4.0) should be included for each asset.
        \item For scraped data from a particular source (e.g., website), the copyright and terms of service of that source should be provided.
        \item If assets are released, the license, copyright information, and terms of use in the package should be provided. For popular datasets, \url{paperswithcode.com/datasets} has curated licenses for some datasets. Their licensing guide can help determine the license of a dataset.
        \item For existing datasets that are re-packaged, both the original license and the license of the derived asset (if it has changed) should be provided.
        \item If this information is not available online, the authors are encouraged to reach out to the asset's creators.
    \end{itemize}

\item {\bf New assets}
    \item[] Question: Are new assets introduced in the paper well documented and is the documentation provided alongside the assets?
    \item[] Answer: \answerYes{} %
    \item[] Justification: We will release the trained weights of our model.
    \item[] Guidelines:
    \begin{itemize}
        \item The answer NA means that the paper does not release new assets.
        \item Researchers should communicate the details of the dataset/code/model as part of their submissions via structured templates. This includes details about training, license, limitations, etc. 
        \item The paper should discuss whether and how consent was obtained from people whose asset is used.
        \item At submission time, remember to anonymize your assets (if applicable). You can either create an anonymized URL or include an anonymized zip file.
    \end{itemize}

\item {\bf Crowdsourcing and research with human subjects}
    \item[] Question: For crowdsourcing experiments and research with human subjects, does the paper include the full text of instructions given to participants and screenshots, if applicable, as well as details about compensation (if any)? 
    \item[] Answer: \answerNA{} %
    \item[] Justification: No human experiments were conducted.
    \item[] Guidelines:
    \begin{itemize}
        \item The answer NA means that the paper does not involve crowdsourcing nor research with human subjects.
        \item Including this information in the supplemental material is fine, but if the main contribution of the paper involves human subjects, then as much detail as possible should be included in the main paper. 
        \item According to the NeurIPS Code of Ethics, workers involved in data collection, curation, or other labor should be paid at least the minimum wage in the country of the data collector. 
    \end{itemize}

\item {\bf Institutional review board (IRB) approvals or equivalent for research with human subjects}
    \item[] Question: Does the paper describe potential risks incurred by study participants, whether such risks were disclosed to the subjects, and whether Institutional Review Board (IRB) approvals (or an equivalent approval/review based on the requirements of your country or institution) were obtained?
    \item[] Answer: \answerNA{} %
    \item[] Justification: No humans experiments were conducted.
    \item[] Guidelines:
    \begin{itemize}
        \item The answer NA means that the paper does not involve crowdsourcing nor research with human subjects.
        \item Depending on the country in which research is conducted, IRB approval (or equivalent) may be required for any human subjects research. If you obtained IRB approval, you should clearly state this in the paper. 
        \item We recognize that the procedures for this may vary significantly between institutions and locations, and we expect authors to adhere to the NeurIPS Code of Ethics and the guidelines for their institution. 
        \item For initial submissions, do not include any information that would break anonymity (if applicable), such as the institution conducting the review.
    \end{itemize}

\item {\bf Declaration of LLM usage}
    \item[] Question: Does the paper describe the usage of LLMs if it is an important, original, or non-standard component of the core methods in this research? Note that if the LLM is used only for writing, editing, or formatting purposes and does not impact the core methodology, scientific rigorousness, or originality of the research, declaration is not required.
    \item[] Answer: \answerNA{} %
    \item[] Justification: We do not use LLMs as a core part of the research.
    \item[] Guidelines:
    \begin{itemize}
        \item The answer NA means that the core method development in this research does not involve LLMs as any important, original, or non-standard components.
        \item Please refer to our LLM policy (\url{https://neurips.cc/Conferences/2025/LLM}) for what should or should not be described.
    \end{itemize}

\end{enumerate}

\end{document}